\newtheorem{lemma}{Lemma}
\newtheorem{assumption}{Assumption}
\newtheorem{definition}{Definition}
\newtheorem{theorem}{Theorem}
\newtheorem{remark}{Remark}
\newcommand{\beas}{\begin{eqnarray*}}
\newcommand{\eeas}{\end{eqnarray*}}
\newcommand{\bea}{\begin{eqnarray}}
\newcommand{\eea}{\end{eqnarray}}
\newcommand{\bes}{\begin{equation*}}
\newcommand{\ees}{\end{equation*}}
\newcommand{\be}{\begin{equation}}
\newcommand{\ee}{\end{equation}}
\newcommand{\bR}{{\mathbb R}}
\DeclareMathOperator*{\argmin}{argmin}
\newcommand{\thmref}[1]{Theorem~\ref{#1}}
\newcommand{\lemref}[1]{Lemma~\ref{#1}}
\newcommand\numberthis{\addtocounter{equation}{1}\tag{\theequation}}
\newcommand{\co}[1]{{\color{black}{#1}}}
\newcommand{\com}[1]{{\color{black}{#1}}}
\begin{document}
\title{Generalized Gradient Flows with Provable Fixed-Time Convergence and Fast Evasion of Non-Degenerate Saddle Points}
\author{Mayank Baranwal, Param Budhraja, Vishal Raj, and Ashish R. Hota, \IEEEmembership{Member, IEEE}
\thanks{Mayank Baranwal is with the Tata Consultancy Services Research, Mumbai, Maharashtra 400607, India (e-mail: baranwal.mayank@tcs.com).}
\thanks{Param Budhraja is with the Department of Electrical and Computer Engineering, Boston University, Boston, MA 02215, USA (e-mail: paramb@bu.edu).}
\thanks{Vishal Raj and Ashish~R.~Hota are with the Department of Electrical Engineering, Indian Institute of Technology, Kharagpur, West Bengal 721302, India (e-mails: vishalrajroy@iitkgp.ac.in, ahota@ee.iitkgp.ac.in).}}

\maketitle

\begin{abstract}
Gradient-based first-order convex optimization algorithms find widespread applicability in a variety of domains, including machine learning tasks. Motivated by the recent advances in fixed-time stability theory of continuous-time dynamical systems, we introduce a generalized framework for designing accelerated optimization algorithms with strongest convergence guarantees that further extend to a subclass of non-convex functions. In particular, we introduce the \emph{GenFlow} algorithm and its momentum variant that provably converge to the optimal solution of objective functions satisfying the Polyak-Łojasiewicz (PL) \co{inequality in a fixed time}. Moreover\co{,} for functions that admit non-degenerate saddle-points, we show that for the proposed GenFlow algorithm, the time required to evade these saddle-points is uniformly bounded for all initial conditions. Finally, for strongly convex-strongly concave minimax problems whose optimal solution is a saddle point, a similar scheme is shown to arrive at the optimal solution again in a fixed time. The superior convergence properties of our algorithm are validated experimentally on a variety of benchmark datasets. %The time of convergence has an intuitive dependence on the hyperparameters of the optimization algorithm. The superior convergence properties of our algorithm are validated experimentally on a variety of benchmark datasets. This work highlights how connections between optimization algorithms and continuous-time dynamical systems can be leveraged to design and analyze convergence behavior of optimization algorithms.
\end{abstract}

\begin{IEEEkeywords}
Accelerated optimization, Fixed-time convergence, Minimax problem, Continuous-time optimization, 
Saddle point evasion
\end{IEEEkeywords}

\section{Introduction}
Consider the unconstrained optimization problem
\begin{equation} \label{optprob}
    \min_{x\in\mathbb{R}^n} f(x),
\end{equation}
where the function $f:\mathbb{R}^n \to \mathbb{R}$ is a differentiable function. A large class of algorithms have been proposed to find an optimal solution of the above problem. Due to several computationally attractive properties, the class of gradient-based first-order optimization algorithms have seen wide applicability in machine learning tasks \cite{bottou2018optimization,beck2017first}. In order to obtain faster convergence guarantees, accelerated gradient methods have been proposed and analyzed as well \cite{polyak1964heavyball,nesterov1983nag,li2020accelerated}.  

The above class of algorithms iteratively update the candidate solutions based on (local estimate of) the gradient of the objective function. Thus, the evolution of the candidate solutions assumes the form of a discrete-time difference equation. Convergence of the iterates to the optimal solution are typically analyzed from a discrete-time viewpoint \cite{bottou2018optimization,beck2017first}. 

More recently, inspired by the seminal paper \cite{wibisono2016variational}, several works have started to view the evolution of the candidate solutions from the lens of a continuous-time dynamical system whose equilibrium point coincides with the optimal solution, and convergence to the equilibrium is tied to its stability property. For instance, the simple gradient descent algorithm for minimizing a convex function $f$ can be mapped to a continuous-time dynamical system using the gradient flow $\dot{x}(t) = -\nabla f(x(t))$. Analysis of the resulting nonlinear dynamical system not only provides fresh insights into the convergence behavior of the underlying optimization algorithm \cite{wibisono2016variational,kovachki2021continuous}, but it also enables leveraging advanced results from the field of control theory \cite{khalil2002nonlinear,lessard2016analysis} to propose new algorithms with improved stability properties associated with the optimal solution leading to accelerated convergence guarantees and under more general assumptions \cite{xu2018accelerated,orvieto2019continuous,romero2020finite,budhraja2021breaking}. 

In particular, the notion of finite-time stability \cite{bhat2000} was examined in \cite{cortes2006} and \cite{romero2020finite} in the context of continuous-time gradient-flows. In this work, we focus on the recently developed notion of fixed-time stability (FxTS)~\cite{polyakov2011nonlinear}, which strengthens the notion of finite-time stability and provides fastest convergence guarantees. An equilibrium point is globally fixed-time stable if the dynamics reaches the equilibrium point in a finite amount of time which can be uniformly upper bounded for all initial conditions. Leveraging the above notion of stability, the authors in \cite{garg2021,budhraja2021breaking} propose the fixed-time stable gradient flow (FxTS-GF) scheme, given by 
\begin{equation*}
\dot{x} = -c_1\frac{\nabla f(x)}{\|\nabla f(x)\|^{\frac{p-2}{p-1}}} -c_2\frac{\nabla f(x)}{\|\nabla f(x)\|^{\frac{q-2}{q-1}}},
\end{equation*} 
with $c_1,c_2>0$, $p>2$, $q\in (1,2)$ under which the optimizer of $f$ is shown to be a fixed-time stable equilibrium when $f$ satisfies the PL-inequality \cite{karimi2016linear}. In particular, a function $f:\bR^n\to\bR$ satisfies PL-inequality with modulus $\mu>0$ if
\begin{equation}\label{eq:pl_inequality}
    \frac{1}{2}\|\nabla f(x)\|^2\geq \mu\left(f(x)-f^*\right),
\end{equation}
where $f^* := \min_{x \in \mathbb{R}^n} f(x)$ is the minimum value of the function. Note that all strongly-convex functions satisfy PL-inequality, however, the converse is not true in general. In fact, functions that satisfy the PL-inequality need not be convex. For instance, the cost function in the logistic regression problem over a compact set is not strongly-convex, nevertheless, it satisfies the PL-inequality. \co{Among non-convex problems of practical interest, recent studies~\cite{frei2021proxy, liu2022loss} demonstrate that both over-parameterized non-linear systems and neural network loss landscapes predominantly satisfy the PL condition on most (but not all) of the parameter space. This property ensures the existence of solutions and enables effective optimization through stochastic gradient descent (SGD).}  Notably, for functions satisfying PL-inequality, every stationary point is a global minimum.\footnote{\co{Although verifying the PL condition is challenging in the absence of a known globally optimal solution, Theorem~2 in~\cite{karimi2016linear} presents various implications that can potentially be utilized to verify the PL inequality. Similarly, in the context over-parameterized learning~\cite{du2018gradient, allen2019convergence}, the existence of zero-loss solutions makes it tractable to verify PL condition in important special cases.}}

The above scheme (FxTS-GF) is closely related to the class of first order algorithms with a re-scaled or normalized gradient descent \cite{levy2016ngd,murray2019ngd,wilson2019accelerating} which have been shown to escape non-degenerate saddle points\footnote{\co{A non-degenerate saddle point is a maximum in some directions and a minimum in others. See Section \ref{section:saddle} for a formal definition.}} more efficiently compared to the conventional gradient descent scheme; the latter may take exponentially long to escape certain saddle points. 

The algorithms discussed above, namely (stochastic) gradient descent (SGD)~\cite{bottou2010large}, rescaled gradient descent~\cite{wibisono2016variational,wilson2019accelerating,murray2019ngd} and fixed-time stable gradient flow~\cite{budhraja2021breaking} are characterized by constant learning rates across each dimension of the decision variable. As a result, the descent magnitude often gets dominated by the largest absolute component of the gradient vector, leading to larger descent steps in some dimensions and smaller steps in others. This poses a significant challenge in high-dimensional non-convex problems, such as the ones that arise in the context of training neural networks. In contrast, algorithms such as Adam~\cite{kingma2014adam} often exhibit improved convergence behavior due to element-wise normalization of the gradient vector resulting in uniform scaling of effective learning rates across each dimension~\cite{duchi2011adaptive, ruder2016overview}.

\subsection{Statement of Contributions}
In this work, we propose a novel continuous-time gradient flow, termed {\it GenFlow}, which includes element-wise normalization to adaptively scale the learning rates for each dimension. It generalizes element-wise scaling algorithms, such as Adam \cite{kingma2014adam}, to guarantee (accelerated) fixed-time convergence for a large class of convex as well as non-convex cost functions. We further provide bounds on escape time from non-degenerate saddle points. Our main contributions are summarized below.

\begin{enumerate}
    \item {\bf Fixed-Time and accelerated convergence for a subclass of non-convex problems:} Most existing accelerated gradient algorithms provide faster convergence guarantees only for a special class of convex functions satisfying strong convexity and/or Lipschitz-smoothness~\cite{polyak1964heavyball,nesterov2003introductory}. In contrast, we prove that under the proposed GenFlow scheme and its momentum variant, the optimal solution is fixed-time stable for the class of cost functions satisfying the PL inequality \eqref{eq:pl_inequality}, which also includes non-convex cost functions.
    
    \item {\bf Fixed-time evasion of non-degenerate saddle points:} In high-dimensional non-convex optimization problems, including problems that arise in the context of training neural networks, the proliferation of non-degenerate saddle points induces a significant challenge for optimization algorithms \cite{dauphin2014identifying}. While the gradient descent algorithm can evade saddle points and converge to the local minima \cite{lee2016gd}, the learning rate could be vanishingly slow and time to escape from the vicinity of a saddle point can be exponentially large \cite{dauphin2014identifying,murray2019ngd,du2017exp}. Recent work has shown both empirically and theoretically that GD with normalization tends to evade saddle points more efficiently \cite{levy2016ngd,murray2019ngd,ge2015saddle}. We show that the under the proposed GenFlow scheme, \com{the time required by the trajectory to escape any neighborhood of the saddle point can be upper bounded by a constant which is independent of the size of the neighborhood, which is in contrast with the results in \cite{dauphin2014identifying,murray2019ngd,du2017exp}.}
    
    %is not only capable of escaping non-degenerate saddle points, but the time of evasion is also uniformly bounded for all initial conditions.
    
    \item {\bf Accelerated convergence for \co{minimax} problems:} In addition to cost minimization problems, the class of \co{minimax} or {\it saddle point} problems arise in several settings including game theory~\cite{du1995minimax}, machine learning~\cite{goodfellow2014generative,madry2017towards} and statistics~\cite{berger2013statistical}. We extend the proposed GenFlow scheme to this class of problems and show that for a strongly convex-strongly concave objective function, a saddle point can be reached uniformly in a fixed time.
    
    \item {\bf Empirical performance:} The proposed GenFlow algorithm and its momentum variant are used for training neural networks and generative adversarial networks (GANs) and are shown to outperform state-of-the-art (SOTA) algorithms including Adam, root mean squared propagation (RMSProp), stochastic variance reduced gradient (SVRG) and fixed-time stable gradient flow (FxTS-GF).
\end{enumerate}

\co{Our proposed work introduces significant improvements and novel contributions compared to similar fixed-time convergent gradient flow schemes in the literature~\cite{garg2021,budhraja2021breaking}. We introduce adaptive gradient scaling through element-wise gradient normalization, which further accelerates convergence speed to optimal solutions when compared with FxTS-GF~\cite{garg2021,budhraja2021breaking} (see \com{Fig.~\ref{fig:Exp}} for details). We also demonstrate, arguably for the first time, that GenFlow can escape non-degenerate saddle points in a fixed time. This class of non-convex functions was not addressed in~\cite{garg2021,budhraja2021breaking}. In addition, we develop a smooth momentum variant of GenFlow which is shown to converge to optimal solutions of functions satisfying the PL inequality in a fixed time. This is in contrast to~\cite{budhraja2021breaking} where the authors assume strong convexity and Lipschitz-smoothness on the cost function to design a momentum variant of a fixed-time convergent algorithm that is discontinuous and exhibits significant chattering near the optimal solution. We further extend the scope of GenFlow to solve minimax optimization problems in a fixed time, which commonly arise in two-player games, GANs, among others.}

\subsection{Technical Preliminaries}

\subsubsection{Notation}

The set of all real numbers is denoted by $\mathbb{R}$, and the $n$-dimensional Euclidean space is denoted by $\mathbb{R}^n$. For a vector $x\in \mathbb{R}^n$, the notation $x^\top$ is used for its transpose. The $\ell_p$ norm for $p\geq 1$ is denoted by $\|\cdot\|_p$, and when $p$ is not specified $\|\cdot\|$ denotes the $\ell_2$ norm. The set of functions $f:U\to V$, where $U\subseteq \mathbb{R}^n$ and $V\subseteq \mathbb{R}^m$, that are $k$-times continuously differentiable is denoted by $C^k (U,V)$. For a function $f\in C^1 (\mathbb{R}^n ,\mathbb{R})$, $\nabla f$ denotes its gradient. The $i^{th}$ component of the gradient vector $\nabla f$ is denoted by $\nabla_i f$. The Hessian of $f\in C^2 (\mathbb{R}^n ,\mathbb{R})$ is denoted by $D^2 f$. 

The set of all eigenvalues of a matrix $A\in \mathbb{R}^{n\times n}$ is called its spectrum, and is denoted by $\sigma(A)$. The minimum absolute value among all eigenvalues of $A$ is denoted by $|\lambda|_{\min}(A)$, i.e., $|\lambda|_{\min}(A) = \min_{\lambda \in \sigma(A)} |\lambda|$. Similarly $|\lambda|_{\max}(A)$ denotes the maximum absolute value of all eigenvalues of $A$. \co{The identity matrix of size $n\times n$ is denoted by $I_n$. For a square matrix $A$, we define $|A|\coloneqq\sqrt{A^\top A}$.}

The open ball of radius $r$ around $x\in \mathbb{R}^n$ is denoted by $B_r (x)\coloneqq \{y\in\mathbb{R}^n \: : \: \|y-x\|_2 <r \}$. For $n\geq 1$, the $n$-dimensional Lebesgue measure is denoted by $\mathcal{L}^n$. 

\subsubsection{Background and Preliminary Results}

We here present a few preliminary results that enable us to establish fixed-time convergence guarantees. We first formally define fixed-time stability of an equilibrium point and then present a sufficient condition for it. 

\begin{definition}[Fixed-Time Stability \cite{polyakov2011nonlinear}]\label{def:FxTS}
    Consider the autonomous differential equation:
    \begin{equation}\label{eq:nonlin}
        \dot{x}(t) = g(x(t)), \qquad \text{with} \qquad g(\text{\co{$x^*$}})=0.
    \end{equation}
    The \co{equilibrium $x^*$} of \eqref{eq:nonlin} is defined to be globally \emph{fixed-time stable} (FxTS), if it is Lyapunov stable, and there exists a uniform settling time $\bar{T}<\infty$, such that for any initial condition $x(0) \in \mathbb{R}^n$, we have 
    \begin{align*}
        x(t) = x^* \quad \text{for all} \quad t\geq\bar{T}.
    \end{align*}
\end{definition}

\begin{lemma}[Fixed-time stability \cite{polyakov2011nonlinear}]\label{lem:FxTS}
    Let $V:\bR^n\to\bR$ with $V\in C^1(\mathbb{R}^n,\mathbb{R})$ be a continuously differentiable Lyapunov function with $V(\text{\co{$x^*$}})=0$, $V(x)>0$ for all $x\in\bR^n\setminus\{x^*\}$, and
    \[\dot{V}(x(t)) \leq -aV(x(t))^{\gamma_1} -bV(x(t))^{\gamma_2},\]
    with $a,b>0$, $\gamma_1\in(0,1)$ and $\gamma_2>1$, then the \co{equilibrium $x^*$} of \eqref{eq:nonlin} is globally FxTS with
    \[\bar{T}\leq \frac{1}{a(1-\gamma_1)} + \frac{1}{b(\gamma_2-1)}.\]
\end{lemma}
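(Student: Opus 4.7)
The plan is to combine a standard Lyapunov stability argument with a scalar comparison result, and then bound the resulting integral by a uniform constant independent of the initial condition. First I would note that the hypothesis $\dot V(x(t)) \le -aV(x(t))^{\gamma_1}-bV(x(t))^{\gamma_2}\le 0$ together with positive definiteness of $V$ (with $V(x^*)=0$, $V>0$ elsewhere) gives Lyapunov stability of $x^*$ in the standard way: $V$ is non-increasing along trajectories, so any sublevel set of $V$ is forward invariant, and continuity of $V$ allows us to extract the usual $\varepsilon$--$\delta$ bound.

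Next I would reduce the vector problem to a one-dimensional one. Set $v(t):=V(x(t))$. Since $V\in C^1$ and $x(\cdot)$ is (absolutely) continuous along trajectories of \eqref{eq:nonlin}, $v$ is continuously differentiable with $\dot v(t)\le -a\,v(t)^{\gamma_1}-b\,v(t)^{\gamma_2}$. A scalar comparison argument then gives $v(t)\le \phi(t)$, where $\phi$ solves the scalar ODE
\[
\dot\phi(t) = -a\,\phi(t)^{\gamma_1}-b\,\phi(t)^{\gamma_2},\qquad \phi(0)=v(0),
\]
on the interval where $\phi>0$. Because the right-hand side is strictly negative whenever $\phi>0$, separation of variables yields the exact settling time
\[
T_\phi \;=\; \int_{0}^{\phi(0)} \frac{d s}{a\,s^{\gamma_1}+b\,s^{\gamma_2}},
\]
at which $\phi$ first reaches $0$. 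By the sandwich $0\le v(t)\le \phi(t)$ and positive definiteness of $V$, $v(T_\phi)=0$ forces $x(T_\phi)=x^*$, after which the trajectory remains at the equilibrium.

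The remaining task is to show $T_\phi$ is bounded uniformly in $\phi(0)$. I would split the integral at $s=1$. On $(0,1]$, dropping the $bs^{\gamma_2}$ term in the denominator yields
\[
\int_{0}^{\min(1,\phi(0))}\frac{d s}{a\,s^{\gamma_1}+b\,s^{\gamma_2}}\;\le\;\int_{0}^{1}\frac{d s}{a\,s^{\gamma_1}}\;=\;\frac{1}{a(1-\gamma_1)},
\]
which is finite precisely because $\gamma_1\in(0,1)$. On $[1,\phi(0)]$ (relevant only when $\phi(0)>1$), dropping the $a s^{\gamma_1}$ term yields
\[
\int_{1}^{\phi(0)}\frac{d s}{a\,s^{\gamma_1}+b\,s^{\gamma_2}}\;\le\;\int_{1}^{\infty}\frac{d s}{b\,s^{\gamma_2}}\;=\;\frac{1}{b(\gamma_2-1)},
\]
which is finite because $\gamma_2>1$. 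Adding the two bounds gives $T_\phi\le \frac{1}{a(1-\gamma_1)}+\frac{1}{b(\gamma_2-1)}=\bar T$ for every initial condition, which is the claimed uniform settling time bound.

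The main technical nuance, rather than a true obstacle, is justifying the scalar comparison cleanly, because vector fields producing fixed-time stable equilibria are typically non-Lipschitz at $x^*$. I would handle this with a one-sided differential-inequality version of the comparison lemma: uniqueness for the \emph{scalar} ODE only needs to hold on $(0,\phi(0)]$, where $s\mapsto -as^{\gamma_1}-bs^{\gamma_2}$ is smooth, and non-uniqueness at $s=0$ is harmless because $\phi$ is monotonically decreasing and must stay at $0$ once it arrives. Everything else is routine, and the two-piece integral split is the one nontrivial estimate that drives the whole result.
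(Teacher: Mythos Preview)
Your argument is correct and follows the standard route used in the literature to establish fixed-time stability bounds: reduce to a scalar differential inequality, use a comparison principle against the exact solution of $\dot\phi=-a\phi^{\gamma_1}-b\phi^{\gamma_2}$, and bound the resulting settling-time integral by splitting at $s=1$ and dropping one term in each piece. The handling of the non-Lipschitz issue at $s=0$ via one-sided comparison on $(0,\phi(0)]$ is also appropriate.

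Note, however, that the paper does not actually prove this lemma: it is stated as a known result and attributed to Polyakov~\cite{polyakov2011nonlinear}, with no proof given in the paper itself. So there is no ``paper's own proof'' to compare against. Your proof is essentially the classical argument from that reference, and nothing in it is in tension with how the paper uses the lemma downstream.
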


\begin{remark}
    \lemref{lem:FxTS} not only provides a sufficient condition for an equilibrium point to be fixed-time stable, but also provides an upper-bound on the uniform settling-time $\bar{T}$ in terms of the parameters $a, b, \gamma_1, \gamma_2$. Thus, given the total budget on the time required to arrive at the equilibrium point of \eqref{eq:nonlin}, the parameters can potentially be tuned so that convergence is guaranteed within the prescribed budget. 
\end{remark}

Our main convergence theorem relies on the following power inequality lemma which is a direct consequence of H\"{o}lder's inequality~\cite{zuo2016distributed}.

\begin{lemma}\label{lem:power-ineq}
    \co{For a vector $z~=~\left(z_1,z_2,\dots,z_n\right)\in\mathbb{R}^n$}, the following holds:
    \begin{equation}
        \left|\sum\limits_{i=1}^nz_i\right|^p \leq \left\{\begin{array}{ll}
             \sum\limits_{i=1}^n |z_i|^p, & p\in(0,1], \\
             n^{p-1}\sum\limits_{i=1}^n |z_i|^p, & p>1.
        \end{array}\right.
    \end{equation}
\end{lemma}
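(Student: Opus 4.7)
The plan is to reduce both cases to scalar inequalities on non-negative reals by first applying the triangle inequality
\[
\left|\sum_{i=1}^n z_i\right| \;\leq\; \sum_{i=1}^n |z_i|,
\]
and then raising both sides to the $p$-th power (valid since $p>0$). It then suffices to bound $\left(\sum_{i=1}^n |z_i|\right)^p$ appropriately in each regime of $p$.

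For the case $p>1$, I would invoke H\"older's inequality with conjugate exponents $p$ and $q = p/(p-1)$, writing each summand as $|z_i|\cdot 1$:
\[
\sum_{i=1}^n |z_i|\cdot 1 \;\leq\; \left(\sum_{i=1}^n |z_i|^p\right)^{1/p}\left(\sum_{i=1}^n 1^{\,q}\right)^{1/q} \;=\; n^{(p-1)/p}\left(\sum_{i=1}^n |z_i|^p\right)^{1/p}.
\]
Raising both sides to the power $p$ immediately yields $\left(\sum_{i=1}^n |z_i|\right)^p \leq n^{p-1}\sum_{i=1}^n |z_i|^p$, and combined with the triangle inequality step this gives the second branch of the claim.

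For the case $p\in(0,1]$, I would use the subadditivity of $t\mapsto t^p$ on $[0,\infty)$, namely $(a+b)^p \leq a^p + b^p$ for $a,b\geq 0$. This can be established by noting that $\varphi(t) := (a+t)^p - t^p$ has derivative $\varphi'(t) = p\bigl[(a+t)^{p-1} - t^{p-1}\bigr]\leq 0$ since $p-1\leq 0$, so $\varphi$ is non-increasing and hence $\varphi(b)\leq \varphi(0)=a^p$. A straightforward induction on $n$ then extends this to $\left(\sum_{i=1}^n |z_i|\right)^p \leq \sum_{i=1}^n |z_i|^p$, giving the first branch.

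There is no substantive obstacle here; the argument is short and the only mildly delicate ingredient is the subadditivity of $t^p$ for $p\in(0,1]$, which is a one-line monotonicity check. The constant $n^{p-1}$ in the $p>1$ case is sharp and is produced directly by the $n^{1/q}$ factor in H\"older's inequality, confirming that the bound cannot be improved without additional structure on $z$.
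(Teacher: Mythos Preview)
Your argument is correct. The paper does not actually supply a proof of this lemma; it merely records it as ``a direct consequence of H\"older's inequality'' with a citation, which is precisely what your $p>1$ branch does, while your $p\in(0,1]$ branch fills in the standard subadditivity argument that the paper omits.
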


\section{Fixed-Time Convergence of GenFlow}

Motivated by the adaptive scaling properties of algorithms with element-wise normalization, and fixed-time stability properties of dynamical systems, we propose the following gradient flow scheme, termed as {\it GenFlow}, for the problem \eqref{optprob}: 
\begin{equation} \label{eq: fadagrad}
    \dot{x}_i = -\frac{\nabla_i f(x)}{|\nabla_i f(x)|^{\frac{p-2}{p-1}}} -\frac{\nabla_i f(x)}{|\nabla_i f(x)|^{\frac{q-2}{q-1}}} \quad \text{for } i=1,2,...,n,
\end{equation}
where $p>2$ and $q\in(1,2)$. Unlike the fixed-time convergent flow described in~\cite{budhraja2021breaking}, the proposed GenFlow scheme adaptively scales the learning rate for each dimension through an element-wise normalization. We now establish convergence guarantees of GenFlow for minimization of functions that satisfy the PL-inequality \eqref{eq:pl_inequality}.

\begin{theorem}[GenFlow Fixed-Time Convergence]\label{thm:F-AdaGrad}
    Let $f:\bR^n\to\bR$ be a continuously differentiable function \co{that satisfies the PL inequality \eqref{eq:pl_inequality} and possesses a unique minimizer}. Then, the GenFlow scheme defined by \eqref{eq: fadagrad} converges to the optimal solution of $f$ in a fixed time independent of the initialization.
\end{theorem}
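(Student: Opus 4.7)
The plan is to use the candidate Lyapunov function $V(x) := f(x) - f^*$ and apply Lemma \ref{lem:FxTS}. Note that $V \geq 0$, $V(x^*) = 0$, and under the PL inequality combined with the assumed uniqueness of the minimizer one has $V(x) = 0 \iff x = x^*$, so $V$ is positive definite relative to $x^*$. It thus suffices to show that along the flow \eqref{eq: fadagrad}, $V$ decays at a rate that fits the hypothesis of Lemma \ref{lem:FxTS}.

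First I would compute $\dot V = \langle \nabla f(x), \dot x \rangle$. Each summand $-\nabla_i f(x)\cdot \nabla_i f(x)/|\nabla_i f(x)|^{(p-2)/(p-1)}$ collapses to $-|\nabla_i f(x)|^{p/(p-1)}$, and analogously for the $q$-term, yielding
\begin{equation*}
\dot V(x) = -\sum_{i=1}^n |\nabla_i f(x)|^{p/(p-1)} - \sum_{i=1}^n |\nabla_i f(x)|^{q/(q-1)}.
\end{equation*}
Next I would translate each sum into a bound on $\|\nabla f(x)\|$ via Lemma \ref{lem:power-ineq} applied to $z_i = |\nabla_i f(x)|^2$. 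Because $p>2$, the exponent $\alpha := p/(2(p-1))$ lies in $(1/2,1)$, so the sub-additive branch of Lemma \ref{lem:power-ineq} gives $\sum_i z_i^\alpha \geq \bigl(\sum_i z_i\bigr)^\alpha = \|\nabla f(x)\|^{p/(p-1)}$. Because $q\in(1,2)$, the exponent $\beta := q/(2(q-1))$ lies in $(1,\infty)$, so the other branch produces $\sum_i z_i^\beta \geq n^{1-\beta}\|\nabla f(x)\|^{q/(q-1)}$. Substituting the PL estimate $\|\nabla f(x)\|^2 \geq 2\mu V(x)$ into both bounds yields
\begin{equation*}
\dot V(x) \leq -(2\mu)^{\alpha} V(x)^{\alpha} - n^{1-\beta}(2\mu)^{\beta} V(x)^{\beta},
\end{equation*}
with $\alpha \in (0,1)$ and $\beta > 1$. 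This matches the hypothesis of Lemma \ref{lem:FxTS}, whose settling-time estimate then yields a uniform, initialization-independent time bound $\bar T$ after which $V(x(t)) = 0$, equivalently $x(t) = x^*$.

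The only real subtlety concerns the well-posedness of \eqref{eq: fadagrad} at points where some component $\nabla_i f(x)$ vanishes, since as written the expression has the formal shape $0/0$. Rewriting each summand as $-\operatorname{sign}(\nabla_i f(x))|\nabla_i f(x)|^{1/(p-1)}$ and $-\operatorname{sign}(\nabla_i f(x))|\nabla_i f(x)|^{1/(q-1)}$ shows that the right-hand side is continuous in $x$ with H\"older exponents $1/(p-1)\in(0,1)$ and $1/(q-1)>1$, which suffices for existence of forward trajectories. The absence of Lipschitz regularity is intrinsic to any fixed-time-convergent flow and does not obstruct the Lyapunov computation above, whose validity only requires $V\circ x(\cdot)$ to be absolutely continuous with the stated derivative almost everywhere.
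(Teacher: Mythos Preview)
Your proposal is correct and mirrors the paper's proof essentially step for step: the same Lyapunov function $V=f-f^*$, the same computation of $\dot V$, the same application of Lemma~\ref{lem:power-ineq} with $z_i=(\nabla_i f(x))^2$ and exponents $\alpha=\tfrac{p}{2(p-1)}$, $\beta=\tfrac{q}{2(q-1)}$, followed by the PL inequality and Lemma~\ref{lem:FxTS}. Your additional paragraph on well-posedness of \eqref{eq: fadagrad} at points where a gradient component vanishes is not in the paper's proof (the paper handles this only informally in a subsequent remark), but it is a welcome clarification and does not alter the argument.
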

\begin{proof}
%[Proof of Theorem~\ref{thm:F-AdaGrad}]
We begin by considering the following candidate Lyapunov function:
    \begin{align}\label{eq:V-PL}
        V(x) \coloneqq f(x)-f^*,
    \end{align}
where $f^* = \min_{x \in \mathbb{R}^n} f(x)$. Let $x^* = \argmin_{x \in \mathbb{R}^n} f(x)$ and $f(x^*)=f^*$. Notice that the Lyapunov function is positive definite, i.e., $V(x)>0$ for all $x\neq x^*$ \co{with $V(x^*)=0$}. The choice of Lyapunov function is also motivated by the definition of PL-inequality. Taking the time-derivative of $V$ along the trajectories of~\eqref{eq: fadagrad} yields:
\begin{align}
    \dot{V} &= \sum\limits_{i=1}^n\nabla_if(x)\dot{x}_i \nonumber \\
    &= -\sum\limits_{i=1}^n|\nabla_if(x)|^{2\cdot\frac{p}{2(p-1)}} - \sum\limits_{i=1}^n|\nabla_if(x)|^{2\cdot\frac{q}{2(q-1)}}, \nonumber \\
    &\leq -\!\left(\!\sum\limits_{i=1}^n(\nabla_if(x))^2\!\!\right)^{\!\frac{p}{2(p-1)}} \!\!\!\!\!\!\!\!-n^{\frac{q-2}{2(q-1)}}\!\!\left(\!\sum\limits_{i=1}^n(\nabla_if(x))^2\!\!\right)^{\!\frac{q}{2(q-1)}} \nonumber \\
    & \leq -\left(2\mu V\right)^{\frac{p}{2(p-1)}} -n^{\frac{q-2}{2(q-1)}}\left(2\mu V\right)^{\frac{q}{2(q-1)}}, \nonumber
\end{align}
where the second last inequality follows from~\lemref{lem:power-ineq} and the last inequality follows from the PL-inequality. Observe that for the given choices of $p$ and $q$, we have  $\dfrac{p}{2(p-1)}<1$ and $\dfrac{q}{2(q-1)}>1$, i.e., the sufficient conditions for fixed-time convergence stated in~\lemref{lem:FxTS} are satisfied. Thus, the GenFlow scheme guarantees uniform convergence \co{of $x(t)$ to $x^*$} in a fixed time. 
\end{proof}

\begin{remark}
\co{Note that the Lyapunov inequality in \lemref{lem:FxTS} is sufficient for uniform convergence of the state trajectory $x(t)$ to $x^*$ in a fixed time, rather than the weaker notion of convergence of $f(x(t))$ to $f^*$.\footnote{\co{Interestingly, fixed-time convergence of $f(x(t))$ to $f^*$ inherently entails the fixed-time convergence of $x(t)$ to $x^*$. This, however, does not extend to other notions of stability, such as with exponential stability, where the convergence rate of $f(x(t))$ to $f^*$ does not automatically translate to the convergence rate of $x(t)$ to $x^*$.}} In Theorem~\ref{thm:F-AdaGrad}, we assume the presence of a unique optimizer $x^*$. However, it is important to note that when a function satisfies the PL inequality, every stationary point is a global minimum. Therefore, even without the requirement of a unique $x^*$, the convergence of $f(x(t))$ to $f^*$ can still be guaranteed. In such cases, the trajectory of $x(t)$ will converge to one of the stationary points.} Note that the uniform settling time can further be obtained in terms of the parameters $\mu, n, p$ and $q$ as stated in~\lemref{lem:FxTS}.
\end{remark}

\begin{remark}
\com{In practice, the scheme~\eqref{eq: fadagrad} is numerically implemented by discretizing the continuous-time dynamics. When the gradient is close to $0$, a small constant is often added to the denominators in order to avoid potential divisibility by zero. It is worth noting that this property is not unique to our algorithm, rather is present in any normalized gradient-based algorithm, such as Adam \cite{kingma2014adam} and Adagrad \cite{duchi2011adagrad}, and the inclusion of a small positive constant in the denominator is a common practice to prevent potential division by zero; see, for instance, PyTorch implementation of Adagrad \cite{torch_adagrad} and RMSprop \cite{torch_rmsprop}. Nevertheless, in most practical problems, the denominators are rarely if ever precisely zero, and the optimization process is typically terminated when the gradient norm reaches a very small value, such as $10^{-7}$, in which case adding the small constant may not even be required.}
\end{remark}

\subsection{Accelerated Convergence via Momentum: GenFlow(M)}

Despite adaptive scaling for each dimension, pathological curvature conditions may halt optimizer's progress along certain directions. In such scenarios, optimizers can be further accelerated using momentum-like updates~\cite{muehlebach2021optimization}. In light of this, we introduce the momentum variant of the GenFlow which we refer to as GenFlow(M):
\begin{align}\label{eq:FM-adagrad}
    \dot{v}_i &= \nabla_if(x) - v_i\left(\frac{1}{|v_i|^{\frac{p-2}{p-1}}}+\frac{1}{|v_i|^{\frac{q-2}{q-1}}}\right), \nonumber \\
    \dot{x}_i &= -v_i -\nabla_if(x)\left(\frac{1}{|\nabla_if(x)|^{\frac{p-2}{p-1}}}+\frac{1}{|\nabla_if(x)|^{\frac{q-2}{q-1}}}\right), \nonumber \\
    &\qquad \qquad\qquad\qquad \qquad\qquad\text{for } i=1,2,...,n,
\end{align}
with $p>2$ and $q\in(1,2)$. Below, we show that the GenFlow(M) dynamics converges to the optimal solution uniformly in a fixed amount of time.

\begin{theorem}[GenFlow(M) Fixed-Time Convergence]\label{thm:FM-AdaGrad}
    Let $f:\bR^n\to\bR$ be a continuously differentiable function \co{that satisfies the PL inequality \eqref{eq:pl_inequality} and possesses a unique minimizer}. Then, the GenFlow\co{(M)} scheme defined by \eqref{eq:FM-adagrad} converges to the optimal solution of $f$ in fixed time independent of the initialization.
\end{theorem}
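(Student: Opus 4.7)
The plan is to mirror the Lyapunov argument used for \thmref{thm:F-AdaGrad}, augmented with a quadratic term in the momentum variable. I would take the composite candidate
\[
V(x,v) \coloneqq f(x) - f^* + \tfrac{1}{2}\|v\|^2,
\]
which is smooth and nonnegative on $\bR^n\times\bR^n$, and, because $f$ satisfies the PL inequality with a unique minimizer, vanishes exactly at the equilibrium $(x^*,0)$ of \eqref{eq:FM-adagrad} and is strictly positive elsewhere.

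The critical algebraic step is to differentiate $V$ along trajectories of \eqref{eq:FM-adagrad} and observe that the indefinite cross terms $-v_i\nabla_i f(x)$ (coming from $\nabla_i f(x)\,\dot x_i$) and $+v_i\nabla_i f(x)$ (coming from $v_i\,\dot v_i$) cancel exactly---this cancellation is precisely the algebraic rationale for the particular asymmetric coupling in \eqref{eq:FM-adagrad}. What survives is a fully dissipative expression
\[
\dot V = -\sum_{i=1}^n\!\left(|\nabla_i f(x)|^{\frac{p}{p-1}} + |\nabla_i f(x)|^{\frac{q}{q-1}} + |v_i|^{\frac{p}{p-1}} + |v_i|^{\frac{q}{q-1}}\right),
\]
which is the analogue of the right-hand side appearing in the proof of \thmref{thm:F-AdaGrad}, but now carrying matching penalties on both $\nabla f$ and $v$.

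Setting $\alpha = p/(p-1)\in(1,2)$ and $\beta = q/(q-1)>2$, I would next apply \lemref{lem:power-ineq} (with $z_i$ replaced by $(\nabla_i f(x))^2$ and $v_i^2$) just as in the proof of \thmref{thm:F-AdaGrad} to bound the four component sums from below by $\|\nabla f\|^\alpha$, $\|v\|^\alpha$, $n^{(q-2)/(2(q-1))}\|\nabla f\|^\beta$, and $n^{(q-2)/(2(q-1))}\|v\|^\beta$, respectively. Merging the $x$- and $v$-contributions through the elementary inequalities $a^{\alpha/2}+b^{\alpha/2}\geq(a+b)^{\alpha/2}$ (valid since $\alpha/2<1$) and $a^{\beta/2}+b^{\beta/2}\geq 2^{1-\beta/2}(a+b)^{\beta/2}$ (valid since $\beta/2>1$), followed by the PL bound $\|\nabla f\|^2\geq 2\mu(f-f^*)$ together with the elementary estimate $\|\nabla f\|^2+\|v\|^2\geq 2\min(\mu,1)\,V$, yields an inequality of the form
\[
\dot V \;\leq\; -aV^{\frac{p}{2(p-1)}} - bV^{\frac{q}{2(q-1)}}
\]
for explicit constants $a,b>0$ depending only on $\mu,n,p,q$.

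Since $p/(2(p-1))\in(1/2,1)$ and $q/(2(q-1))>1$, a direct appeal to \lemref{lem:FxTS} closes the argument, giving fixed-time convergence of $V$, and hence of $(x,v)$, to $(x^*,0)$ with a settling time bounded uniformly in the initial condition by $[a(1-\tfrac{p}{2(p-1)})]^{-1} + [b(\tfrac{q}{2(q-1)}-1)]^{-1}$. The principal obstacle I anticipate lies in the very first step: the four-term dissipation identity hinges on the precise cancellation of the $v_i\nabla_i f$ cross terms, and any tempering of the coupling in \eqref{eq:FM-adagrad}---e.g.\ any reweighting between the gradient-feedback in $\dot v$ and the velocity-feedback in $\dot x$---would leave a residual sign-indefinite term that cannot be absorbed into a power of $V$, breaking the clean Lyapunov estimate.
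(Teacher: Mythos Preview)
Your proposal is correct and follows essentially the same approach as the paper: the identical Lyapunov function $V(x,v)=f(x)-f^*+\tfrac{1}{2}\|v\|^2$, the same cross-term cancellation yielding the four-term dissipative identity, the same application of \lemref{lem:power-ineq} to pass to $\|\nabla f\|$ and $\|v\|$, and the same invocation of the PL inequality followed by \lemref{lem:FxTS}. The only cosmetic difference is that the paper applies PL to replace $\|\nabla f\|^2$ by $2\mu(f-f^*)$ \emph{before} merging with the $v$-terms (and uses $\min(1,\mu^\alpha)$, $\min(1,\mu^\beta)$ to normalize constants), whereas you merge first into $\|\nabla f\|^2+\|v\|^2$ and then apply PL via the combined estimate $\|\nabla f\|^2+\|v\|^2\geq 2\min(\mu,1)V$; the resulting constants differ slightly but the argument is the same.
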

\allowdisplaybreaks
\begin{proof}
%[Proof of Theorem~\ref{thm:FM-AdaGrad}]
    Note that the equilibrium point of GenFlow(M) \eqref{eq:FM-adagrad} is given by the tuple $(x,v) = (x^*,0)$. Since $f$ satisfies PL-inequality, a natural choice for the candidate Lyapunov function is:
    \begin{align}\label{eq:thm2-0}
        V(x,v) \coloneqq (f(x)-f^*) + \frac{1}{2}\sum_{i=1}^nv_i^2.
    \end{align}
    Taking the time-derivative of $V$ along the trajectories of~\eqref{eq:FM-adagrad} yields:
    \begin{align}\label{eq:thm2-1}
        \dot{V} &= \sum_{i=1}^n\nabla_if(x)\dot{x}_i + \sum_{i=1}^nv_i\dot{v}_i \nonumber \\
        &= -\sum\limits_{i=1}^n(\nabla_if(x))^{2\cdot\frac{p}{2(p-1)}}-\sum\limits_{i=1}^n(\nabla_if(x))^{2\cdot\frac{q}{2(q-1)}}\nonumber \\
        &\quad -\sum\limits_{i=1}^nv_i^{2\cdot\frac{p}{2(p-1)}}-\sum\limits_{i=1}^nv_i^{2\cdot\frac{q}{2(q-1)}} \nonumber \\
        &\leq -\left(\sum\limits_{i=1}^n(\nabla_if(x))^2\right)^{\alpha} \!\!\!\!-\left(\sum\limits_{i=1}^nv_i^2\right)^{\alpha} \nonumber \\
        &\quad -n^{\frac{q-2}{2(q-1)}}\left(\left(\sum\limits_{i=1}^n(\nabla_if(x))^2\right)^{\beta}   \!\!\!\!+\left(\sum\limits_{i=1}^nv_i^2\right)^{\beta}\right),
\end{align}
where the last inequality follows from~\lemref{lem:power-ineq} with $\alpha=\dfrac{p}{2(p-1)} < 1$ and $\beta=\dfrac{q}{2(q-1)} > 1$. Using the PL-inequality, \eqref{eq:thm2-1} can further be written as:
\begin{align}\label{eq:thm2-2}
    \dot{V} &\leq -\left(2\mu(f(x)-f^*)\right)^\alpha - \left(\sum\limits_{i=1}^n v_i^2\right)^{\alpha} \nonumber \\
    &\quad - \underbrace{n^{\frac{q-2}{2(q-1)}}}_{\kappa}\left(\left(2\mu(f(x)-f^*)\right)^\beta - \left(\sum\limits_{i=1}^n v_i^2\right)^{\beta}\right) \nonumber \\
    &\leq \!-2^\alpha\min(1,\mu^\alpha)\!\left(\!(f(x)\!-\!f^*)^\alpha+\!\left(\frac{1}{2}\sum\limits_{i=1}^n v_i^2\right)^{\!\!\alpha}\right)\nonumber\\
    &\quad -\kappa2^\beta\min(1,\mu^\beta)\!\left(\!(f(x)\!-\!f^*)^\beta+\!\left(\frac{1}{2}\sum\limits_{i=1}^n v_i^2\right)^{\!\!\beta}\right)\!,\nonumber\\
    &\leq -2^\alpha\min(1,\mu^\alpha)V^\alpha - 2\kappa\min(1,\mu^\beta)V^\beta,
\end{align}
where the last inequality is again obtained using~\lemref{lem:power-ineq}. Recall that~\eqref{eq:thm2-2} satisfies the sufficient condition for fixed-time stability stated in \lemref{lem:FxTS}, thus guaranteeing fixed-time convergence of $x(t)$ to $x^*$ under GenFlow(M).
\end{proof}

\co{Under the proposed scheme, element-wise normalization facilitates adaptive gradient step-sizes along each coordinate, while momentum-based updates accumulate momentum in the directions of previous updates to overcome oscillations caused by noisy gradients and smoothly navigate flat regions where gradients approach zero. These two approaches mutually accelerate basic gradient flows, and their combination further enhances the optimization process which we emphasize in our simulation results as well.}

\allowdisplaybreaks
\begin{remark}[Note on algorithmic implementation]\label{rem:discrete} It is important to note that while continuous-time dynamical systems viewpoint enables rigorous convergence analysis of an optimization algorithm, in practice, it is inevitable that \co{a discretized version is implemented on a digital computer}. While we consider simple Euler-discretization of GenFlow in our experiments, GenFlow(M) is discretized using step-sizes $(\beta,\eta)$ as follows:
\vspace{-0.5mm}
\begin{small}\begin{align*}
	v_i(k+1) &\approx \beta\!\left(\!\frac{v_i(k)}{|v_i(k)|^\frac{p-2}{p-1}}\!+\!\frac{v_i(k)}{|v_i(k)|^\frac{q-2}{q-1}}\!\right)\!+\!(1\!-\!\beta)\nabla_if(x(k)), \\
	x_i(k+1) &= x_i(k) - \eta\left(v_i(k+1)+\frac{\nabla_if(x(k))}{|\nabla_if(x(k))|^\frac{p-2}{p-1}}\right. \nonumber \\
	&\quad \qquad \qquad \qquad \qquad \quad \ +  \left.\frac{\nabla_if(x(k))}{|\nabla_if(x(k))|^\frac{q-2}{q-1}}\right).
\end{align*}
\end{small}
Here $\beta\in(0,1)$ can be understood as the momentum parameter, while $\eta$ represents the learning rate or step-size.

\co{The existing literature on discretization step-size for fixed-time convergent flows primarily consists of results of an existence nature rather than providing explicit bounds~\cite{garg2022}. It is worth mentioning that even for optimization algorithms such as Nesterov's accelerated gradient method or other accelerated methods, the upper bound on the step-size is typically derived based on the Lipschitz constant of the gradient of the objective function~\cite{suh2022continuous}. However, in our work, we do not assume Lipschitz smoothness, which differentiates our approach. Deriving an upper bound on the discretization step-size remains an interesting direction for future research.}
\end{remark}

\section{Saddle Point Evasion in a Fixed Time}\label{section:saddle}

As discussed earlier, classical gradient descent algorithm often slows down in the neighborhood of a saddle point as the gradient has small magnitude in this neighborhood. In this section, we show that due to the normalization term $|\nabla_i f(x)|^{-\frac{p-2}{p-1}}$, the magnitude of the vector field under the GenFlow \eqref{eq: fadagrad} scheme does not decrease as rapidly as for GD. As a result, GenFlow can evade saddle points quickly. Specifically, we show that the time required to escape a neighborhood of the saddle point can be upper bounded by a constant independent of the size of the neighborhood. We impose the following assumption for the analysis in this section.

\begin{assumption} \label{asmp: twice_diff}
The objective function $f$ belongs to the class $C^2(\mathbb{R}^n,\mathbb{R})$.
\end{assumption}

For functions satisfying the above assumption, a non-degenerate saddle point $x^*$ is a point such that $\nabla f(x^*)=0$ and $D^2f(x^*)$ is non-singular. A non-degenerate saddle point also has the property that there exists $\lambda\in \sigma(D^2 (f(x^*)))$ such that $\lambda<0$. Saddle points that satisfy these properties are referred to in the literature \cite{ge2015saddle,sun2015rideablesaddle} as strict (or rideable) saddle points. Non-degenerate saddle points also have a useful property that local improvements are always possible in their neighborhood.

\thmref{th: fadagrad_evasion} below provides an upper bound on the time spent by the trajectory under the GenFlow scheme in a neighborhood of a non-degenerate saddle point of the function $f$. The analysis is inspired by similar techniques in \cite{murray2019ngd}, and relies on the following key lemmas.

\begin{lemma}\label{lemma: norm_convert}
    Assume that function $f$ satisfies Assumption \ref{asmp: twice_diff} and let $x^*$ denote its non-degenerate saddle point. Let $H\coloneqq D^2 (f(x^*))$. Define $\Tilde{d}(x)\coloneqq\sqrt{(x-x^*)^\top |H|(x-x^*)}$ for $x\in \mathbb{R}^n$, where $|B|\coloneqq\sqrt{B^\top B}$ for some square matrix $B$. Denote $\Lambda_{\max}\coloneqq|\lambda|_{\max}(D^2f(x^*))$ and $\Lambda_{\min}\coloneqq|\lambda|_{\min}(D^2f(x^*))$. Then, for $a\geq 0$ the following relationships hold: 
    \begin{align}
        & \|x-x^*\|\leq \frac{a}{\sqrt{\Lambda_{\max}}} \Longrightarrow \Tilde{d}(x) \leq a, \quad \text{and} \label{eq: x_to_dx} \\
        & \Tilde{d}(x)\leq a \Longrightarrow \|x-x^*\|\leq \frac{a}{\sqrt{\Lambda_{\min}}}. \label{eq: dx_to_x}
    \end{align}
\end{lemma}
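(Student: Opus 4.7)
The plan is to reduce both implications to a Rayleigh-quotient bound on the quadratic form $(x-x^*)^\top |H| (x-x^*)$ using the spectral structure of $|H|$, where $H = D^2 f(x^*)$.

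First I would observe that since $f \in C^2$, the Hessian $H$ is symmetric, so it admits an orthogonal eigendecomposition $H = Q\Lambda Q^\top$ with $\Lambda = \operatorname{diag}(\lambda_1,\ldots,\lambda_n)$ and $Q$ orthogonal. Because $H^\top H = H^2 = Q\Lambda^2 Q^\top$, the positive-semidefinite square root is $|H| = \sqrt{H^\top H} = Q\,|\Lambda|\,Q^\top$, where $|\Lambda| = \operatorname{diag}(|\lambda_1|,\ldots,|\lambda_n|)$. Hence the spectrum of $|H|$ is exactly $\{|\lambda_i|\}$, so its largest and smallest eigenvalues are precisely $\Lambda_{\max}$ and $\Lambda_{\min}$ as defined in the lemma. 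Note that non-degeneracy of the saddle point ensures $H$ is non-singular, so $\Lambda_{\min} > 0$ and the square roots below are well-defined.

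Next, I would invoke the standard Rayleigh bounds for the symmetric positive-definite matrix $|H|$: for every $y \in \mathbb{R}^n$,
\begin{equation*}
\Lambda_{\min}\,\|y\|^2 \;\le\; y^\top |H|\, y \;\le\; \Lambda_{\max}\,\|y\|^2.
\end{equation*}
Applying this with $y = x - x^*$ gives
\begin{equation*}
\Lambda_{\min}\,\|x-x^*\|^2 \;\le\; \tilde d(x)^2 \;\le\; \Lambda_{\max}\,\|x-x^*\|^2.
\end{equation*}

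The two claims now follow immediately. For \eqref{eq: x_to_dx}, if $\|x-x^*\| \le a/\sqrt{\Lambda_{\max}}$, the right inequality yields $\tilde d(x)^2 \le \Lambda_{\max}\cdot a^2/\Lambda_{\max} = a^2$, hence $\tilde d(x) \le a$. For \eqref{eq: dx_to_x}, if $\tilde d(x) \le a$, the left inequality yields $\Lambda_{\min}\|x-x^*\|^2 \le a^2$, so $\|x-x^*\| \le a/\sqrt{\Lambda_{\min}}$.

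There is essentially no obstacle here: the entire lemma is a careful bookkeeping exercise about the eigenvalues of $|H|$ versus those of $H$. The only subtlety worth flagging explicitly in the write-up is the identification $\sigma(|H|) = \{|\lambda| : \lambda \in \sigma(H)\}$, which relies on symmetry of $H$ (so that $|H| = \sqrt{H^2}$ shares eigenvectors with $H$ and has eigenvalues $|\lambda_i|$), together with the fact that non-degeneracy at the saddle point guarantees $\Lambda_{\min} > 0$ so the displayed inverse square roots make sense.
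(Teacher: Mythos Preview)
Your proposal is correct and follows essentially the same route as the paper: both diagonalize the symmetric Hessian as $H=Q\Lambda Q^\top$, identify $|H|=Q\,|\Lambda|\,Q^\top$ so that $\sigma(|H|)=\{|\lambda_i|\}$, and then sandwich the quadratic form $(x-x^*)^\top|H|(x-x^*)$ between $\Lambda_{\min}\|x-x^*\|^2$ and $\Lambda_{\max}\|x-x^*\|^2$ to read off the two implications. The only cosmetic difference is that you invoke the Rayleigh-quotient bound by name, whereas the paper phrases it via the semidefinite inequalities $\Lambda_{\max}I_n-|D|\succeq 0$ and $|D|-\Lambda_{\min}I_n\succeq 0$.
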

\begin{proof}
    The detailed proof is included in Appendix~\ref{append:A}.
\end{proof}

In $\mathbb{R}^n$, the notion of distance is usually defined using the Euclidean norm. On the other hand, \lemref{lemma: norm_convert} introduces an alternative notion of distance $\Tilde{d}(x)$, which is used later to prove \thmref{th: fadagrad_evasion}. We also discuss on the relationship between the Euclidean distance and the alternative distance $\Tilde{d}(x)$. The following lemma uses the Taylor's theorem to derive a few useful local approximations.

\begin{lemma} \label{lemma: taylor_approx}
Suppose the function $f$ satisfies Assumption \ref{asmp: twice_diff} and has a non-degenerate saddle point $x^*$. Then, there exist $k_1 > 0.5$, $k_2 <1$, such that for some neighbourhood of $x^*$, the following inequalities hold:
\begin{align}
& |f(x) - f(x^*)| \leq k_1 \Tilde{d}(x)^2, \label{eq: bound_f} \\
& \|\nabla f(x)\|_1 \geq k_2 \|H (x-x^*)\|_1 . \label{eq: taylor_2}
\end{align}
\end{lemma}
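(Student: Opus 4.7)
The plan is to apply Taylor's theorem twice, once for the function value and once for the gradient, and exploit two structural facts: the gradient vanishes at $x^*$ since it is a stationary point, and the Hessian $H = D^2 f(x^*)$ is non-singular, so $\Lambda_{\min} > 0$. Continuity of $D^2 f$ at $x^*$ will then let us absorb the perturbation $D^2 f(\xi) - H$ into constants arbitrarily close to the ideal $1/2$ and $1$ by shrinking the neighborhood.

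For \eqref{eq: bound_f}, I would start from the second-order Taylor expansion with integral remainder, which, since $\nabla f(x^*)=0$, gives $f(x)-f(x^*) = \tfrac{1}{2}(x-x^*)^\top D^2f(\xi)(x-x^*)$ for some $\xi$ on the segment joining $x$ to $x^*$. The first key step is the matrix comparison $-|H| \preceq H \preceq |H|$, valid for any symmetric $H$ since in the eigenbasis we have $-|\lambda_i| \leq \lambda_i \leq |\lambda_i|$. This yields $|(x-x^*)^\top H(x-x^*)| \leq \tilde d(x)^2$. Writing $D^2 f(\xi) = H + (D^2 f(\xi)-H)$ and using continuity of $D^2 f$ to make $\|D^2 f(\xi)-H\|_2 \leq \epsilon$ on a small ball around $x^*$, together with $\|x-x^*\|^2 \leq \tilde d(x)^2/\Lambda_{\min}$ (which follows from Lemma~\ref{lemma: norm_convert} or directly from $|H|\succeq \Lambda_{\min} I$), produces
\[
|f(x)-f(x^*)| \leq \tfrac{1}{2}\tilde d(x)^2 + \tfrac{\epsilon}{2\Lambda_{\min}}\tilde d(x)^2.
\]
Setting $k_1 = \tfrac{1}{2}+\tfrac{\epsilon}{2\Lambda_{\min}} > 0.5$ and choosing the radius of the neighborhood so the above $\epsilon$-bound holds finishes this inequality.

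For \eqref{eq: taylor_2}, I would expand the gradient: $\nabla f(x) = \nabla f(x^*) + D^2 f(\xi)(x-x^*) = H(x-x^*) + (D^2 f(\xi)-H)(x-x^*)$. The reverse triangle inequality in $\|\cdot\|_1$ gives
\[
\|\nabla f(x)\|_1 \geq \|H(x-x^*)\|_1 - \|(D^2 f(\xi)-H)(x-x^*)\|_1.
\]
The nontrivial step is to bound the error $\|(D^2 f(\xi)-H)(x-x^*)\|_1$ by a small multiple of $\|H(x-x^*)\|_1$. Using the standard norm equivalences $\|v\|_2 \leq \|v\|_1 \leq \sqrt{n}\|v\|_2$, nonsingularity of $H$ gives $\|x-x^*\|_2 \leq \|H(x-x^*)\|_2/\Lambda_{\min} \leq \|H(x-x^*)\|_1/\Lambda_{\min}$, while the error term is bounded by $\sqrt{n}\,\|D^2 f(\xi)-H\|_2\,\|x-x^*\|_2$. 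Combining these and again using continuity of $D^2 f$ to make $\|D^2 f(\xi)-H\|_2 \leq \epsilon$ on a small enough ball yields $\|\nabla f(x)\|_1 \geq (1 - \sqrt{n}\,\epsilon/\Lambda_{\min})\|H(x-x^*)\|_1$. For $\epsilon$ sufficiently small, $k_2 := 1-\sqrt{n}\,\epsilon/\Lambda_{\min} \in (0,1)$, as required.

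The main conceptual obstacle is cleanly moving between the three scales $\|x-x^*\|_2$, $\tilde d(x)$, and $\|H(x-x^*)\|_1$; these conversions all rely on $\Lambda_{\min} > 0$ (non-degeneracy) and on standard norm equivalences, and they are already prepared for by Lemma~\ref{lemma: norm_convert}. Once that bookkeeping is in place, the two inequalities follow by choosing a single common neighborhood of $x^*$ on which the Hessian deviates from $H$ by at most the required $\epsilon$ in operator norm.
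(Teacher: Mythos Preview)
Your approach is essentially the same as the paper's: apply Taylor's theorem to $f$ and to $\nabla f$ at $x^*$, use $\nabla f(x^*)=0$ and the comparison $-|H|\preceq H\preceq |H|$ for \eqref{eq: bound_f}, and the reverse triangle inequality together with non-degeneracy ($\Lambda_{\min}>0$) for \eqref{eq: taylor_2}; the paper phrases the remainders in Peano form while you use the Lagrange/continuity-of-$D^2 f$ formulation, but the arguments are interchangeable. One small caveat: the identity $\nabla f(x)=D^2 f(\xi)(x-x^*)$ with a \emph{single} intermediate point $\xi$ is not literally valid for vector-valued maps, so for that step you should either use the integral remainder $\nabla f(x)=\int_0^1 D^2 f(x^*+t(x-x^*))(x-x^*)\,dt$ or the Peano form $\nabla f(x)=H(x-x^*)+o(\|x-x^*\|)$, after which your bound goes through verbatim.
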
  

\begin{proof}
The detailed proof is included in Appendix~\ref{append:B}.
\end{proof}

\begin{theorem}[Fast Evasion of Saddle-Points] \label{th: fadagrad_evasion}
Suppose the function $f$ satisfies Assumption~\ref{asmp: twice_diff}. Let $x^*$ denote a non-degenerate saddle point of $f$. Let $x(t)$ be the solution of GenFlow \eqref{eq: fadagrad} with initial condition $x_0$ such that the solution $x(t)$ does not converge to the saddle point $x^*$. Then, for all sufficiently small $r>0$, the time spent in $B_r(x^*)\setminus \{x^*\}$ is upper bounded as
\begin{align}\label{eq: saddle_evasion_result} 
    &\mathcal{L}^1\left(\{t\geq0:x(t)\in B_r(x^*)\setminus \{x^*\}\}\right)\leq \nonumber \\
    &\min\left\{\!n^{\frac{1}{p-1}}\frac{8k_1}{k_2^{\frac{p}{p-1}}}\frac{\Lambda_{\max}^{\frac{p-2}{2(p-1)}}}{\Lambda_{\min}^{\frac{p}{2(p-1)}}}r^{\frac{p-2}{p-1}}, n^{\frac{1}{q-1}} \frac{8k_1}{ k_2^{\frac{q}{q-1}}}\frac{\Lambda_{\max}^{\frac{q-2}{2(q-1)}}}{\Lambda_{\min}^{\frac{q}{2(q-1)}}}r^{\frac{q-2}{q-1}}\!\right\},
\end{align}
where $k_1>0.5$, $k_2<1$,  $\Lambda_{\max}=|\lambda|_{\max}(D^2f(x^*))$ and $\Lambda_{\min}=|\lambda|_{\min}(D^2f(x^*))$.
\end{theorem}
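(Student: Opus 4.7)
The plan is to introduce a signed Lyapunov candidate $V(x) \coloneqq f(x) - f(x^*)$, derive a one-sided inequality of the form $-\dot V \geq \tilde{C}\,|V|^{\gamma}$ valid in a neighborhood of $x^*$, and convert it into a time bound via a monotone change of variables. Along the GenFlow dynamics \eqref{eq: fadagrad} a direct computation yields $\dot V = \nabla f(x)^\top \dot x = -\sum_{i=1}^n\bigl(|\nabla_i f|^{p/(p-1)} + |\nabla_i f|^{q/(q-1)}\bigr) \leq 0$, so $V(x(t))$ is non-increasing, and strictly decreasing whenever $x \neq x^*$.

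The key inequality is obtained by chaining three preliminary results. First, \lemref{lem:power-ineq} with exponent $p/(p-1) > 1$ gives $\sum_i |\nabla_i f|^{p/(p-1)} \geq n^{-1/(p-1)}\|\nabla f\|_1^{p/(p-1)}$. Next, \eqref{eq: taylor_2} of \lemref{lemma: taylor_approx} combined with $\|\cdot\|_1 \geq \|\cdot\|_2$ and a short eigen-decomposition of $H$ yields $\|\nabla f\|_1 \geq k_2\sqrt{\Lambda_{\min}}\,\tilde d(x)$, while \eqref{eq: bound_f} gives $\tilde d(x)^2 \geq |V|/k_1$. Combining these produces $-\dot V \geq \tilde C_p\, |V|^{p/(2(p-1))}$ with $\tilde C_p = n^{-1/(p-1)} k_2^{p/(p-1)} \Lambda_{\min}^{p/(2(p-1))} k_1^{-p/(2(p-1))}$, and an identical argument using the $q$-term gives $-\dot V \geq \tilde C_q\, |V|^{q/(2(q-1))}$.

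Within the ball, \eqref{eq: x_to_dx} of \lemref{lemma: norm_convert} (in its contrapositive form $\tilde d(x)^2 \leq \Lambda_{\max}\|x-x^*\|^2$) combined with \eqref{eq: bound_f} gives $|V(x(t))| \leq M \coloneqq k_1 \Lambda_{\max} r^2$ for every $t \in \mathcal{T}$. Since $V$ is strictly decreasing on $\mathcal{T}$ (as $\nabla f \neq 0$ whenever $x \neq x^*$), the map $t \mapsto -V(t)$ is a homeomorphism on each maximal visit-interval of $\mathcal{T}$, with image sub-intervals pairwise disjoint and contained in $[-M, M]$. Integrating using the $p$-inequality and changing variables to $u = -V$ gives $T \leq \tfrac{1}{\tilde C_p}\int_{-M}^{M} |u|^{-p/(2(p-1))}\,du$, which is finite since $p/(2(p-1)) < 1$. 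Plugging $M = k_1 \Lambda_{\max} r^2$ and collecting constants reproduces the first term inside the $\min$ in \eqref{eq: saddle_evasion_result}; repeating the argument verbatim with the $q$-inequality yields the second.

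The main obstacle I expect is sign-handling of $V$ near a non-degenerate saddle, since $f$ necessarily takes values both above and below $f(x^*)$ in every neighborhood of $x^*$: the monotone change-of-variables must correctly accommodate $V(x(t))$ crossing zero and potentially infinitely many re-entries into the ball. A secondary subtlety arises for the $q$-term, since the exponent $q/(2(q-1)) > 1$ renders $\int|u|^{-q/(2(q-1))}\,du$ singular at $u=0$; the argument must exploit that for sufficiently small $r$ the disjoint $u$-image intervals can be separated from a neighborhood of zero, or that the near-saddle contribution is already absorbed by the $p$-term, so that the announced $r^{(q-2)/(q-1)}$ dependence still holds.
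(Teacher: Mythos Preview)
Your $p$-term argument is sound and delivers the correct $r^{(p-2)/(p-1)}$ scaling, though the constant you obtain is $\tfrac{4(p-1)}{p-2}$ rather than the stated $8$, so ``reproduces'' is a slight overstatement. The paper takes a different route: it keeps $e(t)\coloneqq\tilde d(x(t))$ as the central object, derives the rate inequality $-\tfrac{d}{dt}f(x(t))\geq k_3\,e(t)^{p/(p-1)}+k_4\,e(t)^{q/(q-1)}$ directly in terms of $e$ (not in terms of $|V|$), integrates over $\{e\leq\eta\}$ using $|f(x(t_0))-f(x(t_3))|\leq 2k_1\eta^2$ from \eqref{eq: bound_f} to obtain $\int_{e\leq\eta}e^{p/(p-1)}\,ds\leq \tfrac{2k_1}{k_3}\eta^2$, and then extracts the measure bound via an iterated Markov inequality over dyadic shells of $e$.

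Your $q$-term argument has a genuine gap that neither of your proposed fixes resolves. The differential inequality $-\dot V\geq \tilde C_q|V|^{q/(2(q-1))}$ degenerates precisely on the level set $\{f=f(x^*)\}$, and at a non-degenerate saddle this level set meets every neighborhood of $x^*$ (since $f$ takes values both above and below $f(x^*)$ arbitrarily close to $x^*$). Hence the trajectory can, and generically will, cross $V=0$ \emph{while inside} $B_r(x^*)$, so the $u$-image intervals cannot be separated from a neighborhood of zero for any $r>0$; and invoking the $p$-term to absorb the near-zero contribution yields the $p$-scaling, not the announced $r^{(q-2)/(q-1)}$ dependence. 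The structural reason the paper's route avoids this degeneracy is that it bounds $-\dot V$ below by powers of $\tilde d(x)$ rather than of $|V|$: since $|H|$ is positive definite, $\tilde d(x)=0$ only at $x=x^*$, so the rate lower bound never collapses on $B_r(x^*)\setminus\{x^*\}$. If you want to salvage your change-of-variables idea for the $q$-term, you would need to work with $e(t)$ (or another quantity that vanishes only at $x^*$) rather than with $V$.
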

\begin{proof}
    The detailed proof is included in Appendix~\ref{append:C}.
\end{proof}

Note that in equation \eqref{eq: saddle_evasion_result} the exponent of the term $r^{\frac{p-2}{p-1}}$ is positive whereas the exponent of the term $r^{\frac{q-2}{q-1}}$ is negative. Hence as the radius $r$ of the neighbourhood $B_r (x^*)$ increases the term $r^{\frac{p-2}{p-1}}$ increases whereas the term $r^{\frac{q-2}{q-1}}$ decreases. Thus, at some finite value both the terms must be the same. Let the constant $\hat{r}$ be such that
\begin{equation} \label{eq: fxt_evasion_bound}
n^{\frac{1}{p-1}}\frac{8k_1}{k_2^{\frac{p}{p-1}}}\frac{\Lambda_{\max}^{\frac{p-2}{2(p-1)}}}{\Lambda_{\min}^{\frac{p}{2(p-1)}}}\hat{r}^{\frac{p-2}{p-1}} =  n^{\frac{1}{q-1}} \frac{8k_1}{ k_2^{\frac{q}{q-1}}}\frac{\Lambda_{\max}^{\frac{q-2}{2(q-1)}}}{\Lambda_{\min}^{\frac{q}{2(q-1)}}}\hat{r}^{\frac{q-2}{q-1}}.
\end{equation}
Then, for radius $r$ satisfying the conditions of Theorem~\ref{th: fadagrad_evasion}, it can be said that 
\begin{align*}
    &\mathcal{L}^1\left(\{t\geq0:x(t)\in B_r(x^*)\setminus \{x^*\}\}\right) \leq\\
    &\quad n^{\frac{1}{p-1}}\frac{8k_1}{k_2^{\frac{p}{p-1}}}\frac{\Lambda_{\max}^{\frac{p-2}{2(p-1)}}}{\Lambda_{\min}^{\frac{p}{2(p-1)}}}\hat{r}^{\frac{p-2}{p-1}} =  n^{\frac{1}{q-1}} \frac{8k_1}{ k_2^{\frac{q}{q-1}}}\frac{\Lambda_{\max}^{\frac{q-2}{2(q-1)}}}{\Lambda_{\min}^{\frac{q}{2(q-1)}}}\hat{r}^{\frac{q-2}{q-1}},    
\end{align*}
where $\hat{r}$ satisfies \eqref{eq: fxt_evasion_bound}. Note that the time required to escape the neighbourhood $B_r (x^*)$ is upper bounded by a constant that is independent of the size of the neighbourhood $r$. 

\begin{remark}
The above result holds true for initial conditions for which the trajectory does not converge to the saddle point. This is indeed true in most scenarios \cite{lee2016gd}.
\end{remark}

\section{Fixed-time convergent flows for minimax problems}\label{section:minmax}

The GenFlow scheme can further be extended to obtain solutions of strongly convex-strongly concave minimax problem:
\begin{align}\label{eq:minimax}
    \co{\min\limits_{x\in\bR^{n}}\max\limits_{y\in\bR^{m}}} g(x,y).
\end{align}
The function $g(\cdot,\cdot)$ is strongly convex-strongly concave with modulii $\mu_1, \mu_2$ if it is strongly convex in its first argument with modulus $\mu_1$ and strongly concave in the second argument with modulus $\mu_2$. We consider the following saddle point dynamics:
\begin{align}\label{eq:saddle-pt}
    \dot{x}_i &= -\frac{\nabla_{x_i} g(x,y)}{\|\nabla G\|^{\frac{p-2}{p-1}}} - \frac{\nabla_{x_i} g(x,y)}{\|\nabla G\|^{\frac{q-2}{q-1}}}, \quad &\text{for } i=1,2,\dots,\co{n}, \nonumber \\
    \dot{y}_j &= \frac{\nabla_{y_j} g(x,y)}{\|\nabla G\|^{\frac{p-2}{p-1}}} + \frac{\nabla_{y_j} g(x,y)}{\|\nabla G\|^{\frac{q-2}{q-1}}}, \quad &\text{for } j=1,2,\dots,\co{m},
\end{align}
with $p>2$, $q\in(1,2)$, and the term in the denominator $\nabla G(x,y) \coloneqq [\nabla_xg^\top(x,y) \quad -\nabla_yg^\top(x,y)]^\top$. \co{For solving minimax optimization problems, achieving accelerated convergence in one variable does not automatically ensure accelerated convergence in the other variable. To guarantee joint accelerated convergence, it is essential to couple the dynamics of both $x$ and $y$ which is achieved through the norm of the total gradient vector $\nabla G$.} Along with the normalization, this coupling is also crucial for imparting fixed-time convergence as shown in the following theorem.

%Note that the $x$ and $y$ dynamics in~\eqref{eq:saddle-pt} are coupled through the norm of the total gradient vector $\nabla G$.

\begin{theorem}[Fixed-Time Convergence to Saddle Point]\label{thm:saddle_pt_dyn}
Suppose the function $g:\bR^{\co{n}}\times \bR^{\co{m}}\to\bR$ in~\eqref{eq:minimax} is strongly convex-strongly concave with modulii $\mu_1$ and $\mu_2$, respectively. Then, the saddle-point dynamics described in~\eqref{eq:saddle-pt} converges to the unique optimal solution uniformly in a fixed amount of time, independent of the initialization.
\end{theorem}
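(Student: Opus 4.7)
The plan is to mirror the structure of the proof of \thmref{thm:F-AdaGrad} while exploiting the strongly convex--strongly concave structure through the saddle-point operator. Write $z := (x, y) \in \bR^{n+m}$ and $F(z) := (\nabla_x g(x,y),\, -\nabla_y g(x,y))$; then $\nabla G(x,y) = F(z)$, and the dynamics~\eqref{eq:saddle-pt} collapses into the compact vector form $\dot z = -h(\|F(z)\|)\,F(z)$ where $h(u) := u^{-(p-2)/(p-1)} + u^{-(q-2)/(q-1)}$. The unique saddle point $z^* = (x^*, y^*)$ guaranteed by strong convexity--concavity satisfies $F(z^*) = 0$, and I would take $V(z) := \tfrac{1}{2}\|z - z^*\|^2$ as the Lyapunov candidate.

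The key analytical ingredient, playing the role that the PL inequality played in \thmref{thm:F-AdaGrad}, is \emph{strong monotonicity} of $F$. Strong convexity of $g(\cdot, y)$ (modulus $\mu_1$) and strong concavity of $g(x, \cdot)$ (modulus $\mu_2$) together imply $\langle z - z',\, F(z) - F(z')\rangle \geq \mu\,\|z - z'\|^2$ with $\mu := \min(\mu_1, \mu_2)$, via the standard four-inequality argument (or, when $g \in C^2$, by noting that the symmetric part of $DF$ is the block diagonal $\operatorname{diag}(\nabla_{xx}^2 g,\, -\nabla_{yy}^2 g) \succeq \mu\, I_{n+m}$). Setting $z' = z^*$ yields $\langle z - z^*,\, F(z)\rangle \geq 2\mu V(z)$, and Cauchy--Schwarz then forces $\|F(z)\| \geq \mu\sqrt{2V(z)}$.

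Differentiating along trajectories produces $\dot V = -h(\|F(z)\|)\,\langle z - z^*,\, F(z)\rangle$, which I would split into its $p$- and $q$-parts. For the $q$-part ($q \in (1,2)$) the exponent $(2-q)/(q-1)$ is positive, so the lower bound $\|F(z)\| \geq \mu\sqrt{2V}$ substitutes directly into $\|F(z)\|^{(2-q)/(q-1)}$ and, combined with $\langle z-z^*, F(z)\rangle \geq 2\mu V$, produces a contribution of the form $-c_2\, V^{q/(2(q-1))}$ with $q/(2(q-1)) > 1$. For the $p$-part ($p > 2$) the exponent $-(p-2)/(p-1)$ is negative; here I would use a Lipschitz-type upper bound $\|F(z)\| \leq L\,\|z - z^*\|$ to bound $\|F(z)\|^{-(p-2)/(p-1)}$ from below, obtaining a contribution of $-c_1\, V^{p/(2(p-1))}$ with $p/(2(p-1)) < 1$. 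Assembling the two pieces gives $\dot V \leq -c_1 V^{p/(2(p-1))} - c_2 V^{q/(2(q-1))}$, and invoking \lemref{lem:FxTS} immediately delivers uniform fixed-time convergence of $z(t)$ to $z^*$.

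The main obstacle is the $p$-part: strong monotonicity alone yields only the lower bound $\|F(z)\| \geq \mu\,\|z-z^*\|$, and because the exponent $-(p-2)/(p-1)$ is negative this supplies an \emph{upper} rather than the needed \emph{lower} bound on $\|F(z)\|^{-(p-2)/(p-1)}$. The cleanest resolution is to assume (as is standard in such analyses) that ``strongly convex--strongly concave'' tacitly includes a bounded-Hessian/Lipschitz-gradient hypothesis, so that $\|F(z)\| \leq L\|z-z^*\|$ is available; an alternative would be to design a Lyapunov function coupling $\|z-z^*\|^2$ with terms in $\|F(z)\|$ so as to avoid the need for an upper bound, but that route appears substantially more delicate.
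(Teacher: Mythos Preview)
Your approach has a genuine gap that you yourself identify: handling the $p$-term with $V(z)=\tfrac12\|z-z^*\|^2$ forces you to bound $\|F(z)\|^{-(p-2)/(p-1)}$ from below, which requires an \emph{upper} bound $\|F(z)\|\le L\|z-z^*\|$. That is a Lipschitz-gradient assumption, and it is \emph{not} part of the theorem's hypotheses. Tacitly adding it changes the statement you are proving; the paper does not assume it, and the proof should not either.

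The paper avoids this obstacle entirely by choosing a different Lyapunov function, namely
\[
V(x,y)\;=\;\tfrac12\|\nabla_x g(x,y)\|^2+\tfrac12\|\nabla_y g(x,y)\|^2\;=\;\tfrac12\|\nabla G\|^2.
\]
With this choice one has $\|\nabla G\|^2=2V$ \emph{exactly}, so both normalization terms $\|\nabla G\|^{-(p-2)/(p-1)}$ and $\|\nabla G\|^{-(q-2)/(q-1)}$ are already expressed in terms of $V$ with no auxiliary bounds required. Differentiating $V$ along the flow (assuming $g\in C^2$) produces
\[
\dot V \;=\; h(\|\nabla G\|)\Big[-\nabla_x g^\top\nabla_{xx}^2 g\,\nabla_x g+\nabla_y g^\top\nabla_{yy}^2 g\,\nabla_y g\Big],
\]
the mixed $\nabla_{xy}^2 g$ terms cancelling because the dynamics for $x$ and $y$ share the same scalar factor $h(\|\nabla G\|)$. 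Strong convexity/concavity in the form $\nabla_{xx}^2 g\succeq\mu_1 I$ and $\nabla_{yy}^2 g\preceq-\mu_2 I$ then yields
\[
\dot V \;\le\; -\min(\mu_1,\mu_2)\Big[(2V)^{\frac{p}{2(p-1)}}+(2V)^{\frac{q}{2(q-1)}}\Big],
\]
and \lemref{lem:FxTS} applies directly. No Lipschitz bound, no extra coupling terms---the ``alternative Lyapunov function'' you allude to at the end is simply $\tfrac12\|F(z)\|^2$, and it is not delicate at all; it is the natural choice here precisely because the normalization in~\eqref{eq:saddle-pt} is by $\|\nabla G\|$ rather than elementwise.
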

\begin{proof}
    Note that the saddle point is also characterized by the first-order stationarity condition, and since the function $g$ is strongly convex-strongly concave in its arguments, the stationary point is indeed the saddle point (or the equilibrium of the saddle point dynamics~\eqref{eq:saddle-pt}). The strong convexity-strong concavity translates to the following set of conditions on the second-order partial derivatives:
    \begin{align}\label{eq:thm4-1}
        \nabla_{xx}^2g(x,y) \succcurlyeq \mu_1I_{n_1} \quad \text{and} \quad \nabla_{yy}^2g(x,y) \preccurlyeq -\mu_2I_{n_2}, 
    \end{align}
    for every $x \in \bR^n, y \in \bR^m$. In light of the stationarity conditions and the strongly convex-strongly concave assumptions, a natural choice for the candidate Lyapunov function is:
    \begin{align}\label{eq:thm4-2}
        V\coloneqq \frac{1}{2}\|\nabla_{x}g(x,y)\|^2 + \frac{1}{2}\|\nabla_{y}g(x,y)\|^2.
    \end{align}
    Taking time-derivative of $V$ along the trajectories of~\eqref{eq:saddle-pt} yields:
    \begin{align}\label{eq:thm4-3}
        \dot{V} &= \nabla_xg(x,y)^\top\nabla_{xx}^2g(x,y)\dot{x} + \nabla_yg(x,y)^\top\nabla_{yy}^2g(x,y)\dot{y} \nonumber \\
        &\quad + \underbrace{\left(\nabla_xg(x,y)^\top\nabla_{xy}^2g(x,y)\dot{y}+\nabla_yg(x,y)^\top\nabla_{xy}^2g(x,y)\dot{x}\right)}_{=0},
    \end{align}
    where the last term evaluates to zero following the saddle point dynamics. Substituting~\eqref{eq:saddle-pt} into~\eqref{eq:thm4-3} results in:
    \begin{align}\label{eq:thm4-4}
        \dot{V} &= \Big(-\nabla_xg^\top\nabla_{xx}^2g\nabla_xg  +\nabla_yg^\top\nabla_{yy}^2g\nabla_yg\Big)\Bigg[\frac{1}{\|\nabla G\|^{\frac{p-2}{p-1}}}\nonumber\\
        &\ \quad +\frac{1}{\|\nabla G\|^{\frac{q-2}{q-1}}}\Bigg] \nonumber\\
        %&\leq -\mu_1\frac{\|\nabla_xg(x,y)\|^2}{\|\nabla G\|^{\frac{p-2}{p-1}}+\|\nabla G\|^{\frac{q-2}{q-1}}} - \mu_2\frac{\|\nabla_yg(x,y)\|^2}{\|\nabla G\|^{\frac{p-2}{p-1}}+\|\nabla G\|^{\frac{q-2}{q-1}}},
        &\leq -\Big(\mu_1 \|\nabla_xg(x,y)\|^2 + \mu_2 \|\nabla_yg(x,y)\|^2\Big)\Bigg[\frac{1}{\|\nabla G\|^{\frac{p-2}{p-1}}}\nonumber
        \\ &\ \quad + \frac{1}{\|\nabla G\|^{\frac{q-2}{q-1}}}\Bigg]
    \end{align}
    where the last inequality follows from~\eqref{eq:thm4-1}. Recall that $\|\nabla G\|^2 = \|\nabla_x g(x,y)\|^2 + \|-\nabla_y g(x,y)\|^2$, resulting in:
    \begin{align}\label{eq:thm4-5}
        \dot{V} &\leq -\min(\mu_1,\mu_2)\left(\|\nabla G\|^{2\cdot\frac{p}{2(p-1)}} + \|\nabla G\|^{2\cdot\frac{q}{2(q-1)}}\right) \nonumber \\
        & = -\min(\mu_1,\mu_2)\left((2V)^{\frac{p}{2(p-1)}}+(2V)^{\frac{q}{2(q-1)}}\right),
    \end{align}
    which satisfies the sufficient condition for fixed-time convergence stated in \lemref{lem:FxTS}.
\end{proof}

%%%%%%%%%%%%%%%%%%%%%%%%%%%%%%%%%%%%%%%%%%%%%%%
%%%%%%%%%%%%%%%%%%%%%%%%%%%%%%%%%%%%%%%%%%%%%%%
%%%%%%%%%%%%%%%%%%%%%%%%%%%%%%%%%%%%%%%%%%%%%%%

\section{Experiments}
\begin{figure*}[!ht]
	\begin{center}
		\begin{tabular}{ccc}
			\includegraphics[width=0.6\columnwidth]{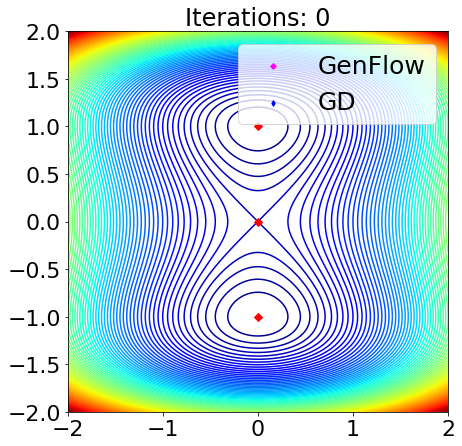}&
			\includegraphics[width=0.6\columnwidth]{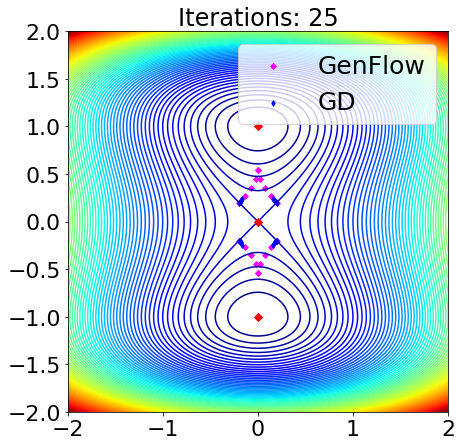}&
			\includegraphics[width=0.6\columnwidth]{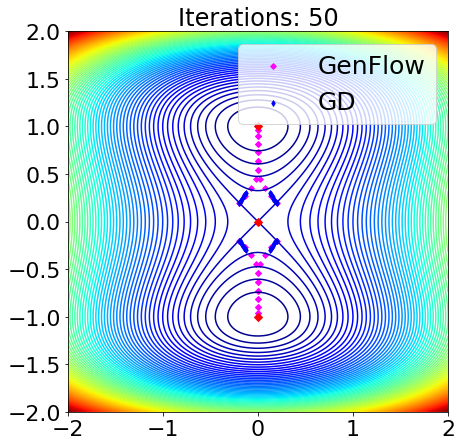}\cr
			(a)&(b)&(c)\cr
			\includegraphics[width=0.6\columnwidth]{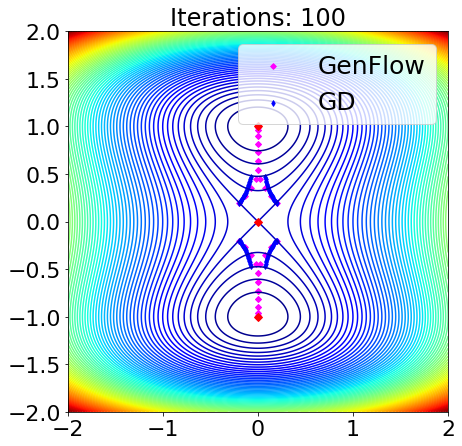}&
			\includegraphics[width=0.6\columnwidth]{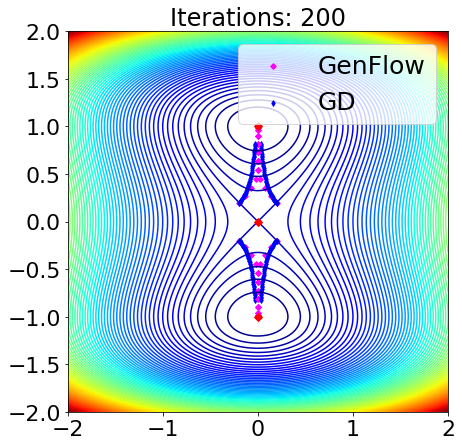}&
			\includegraphics[width=0.6\columnwidth]{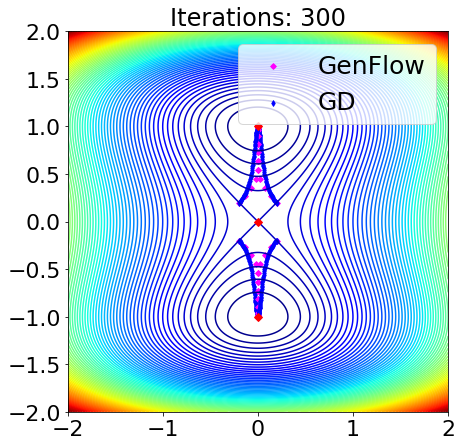}\cr
			(d)&(e)&(f)
		\end{tabular}
		\caption{\co{Snapshots at different iterations for evasion of saddle points. GenFlow(M) is able to evade the saddle points and converge to optimum in less than 50 iterations (slightly better than GenFlow), while gradient descent requires more than 300 iterations for convergence.}}
		\label{fig:SnapShots}
	\end{center}
	\vspace{-2em}
\end{figure*}

Having analyzed the theoretically superior convergence behavior of GenFlow and its momentum variant, we now focus our attention to empirical performance of the GenFlow scheme for a variety of tasks, involving (i) fast evasion of non-degenerate saddle points, (ii) training of deep neural networks for classification tasks, (iii) training of generative adversarial networks (GANs)~\cite{goodfellow2014generative}. The GenFlow and its momentum variant were implemented using PyTorch's inbuilt optimizer class, and can thus be easily integrated with any neural network training task. While GenFlow(M) is discretized as discussed in Remark~\ref{rem:discrete}, we use a simple forward-Euler discretization for GenFlow. Using similar techniques in \cite[Theorem 2]{garg2022}, it can be shown that under assumptions of strong-convexity and PL-inequality, the forward-Euler discretization results in $(T,\epsilon)$-close discrete-time approximations of GenFlow and GenFlow(M), however, its detailed analysis is beyond the scope of the current study and left as part of the future work. \co{In line with standard practice for optimization algorithms employing gradient normalization, we include a small numerical constant in the denominator term to prevent possible numerical issues related to division by zero. In the numerical implementation of GenFlow and GenFlow(M), we follow a similar approach by incorporating a small positive constant $10^{-7}$ to ensure numerical stability.}

\begin{figure*}[!ht]
	\begin{center}
		\begin{tabular}{ccc}
			\includegraphics[width=0.64\columnwidth]{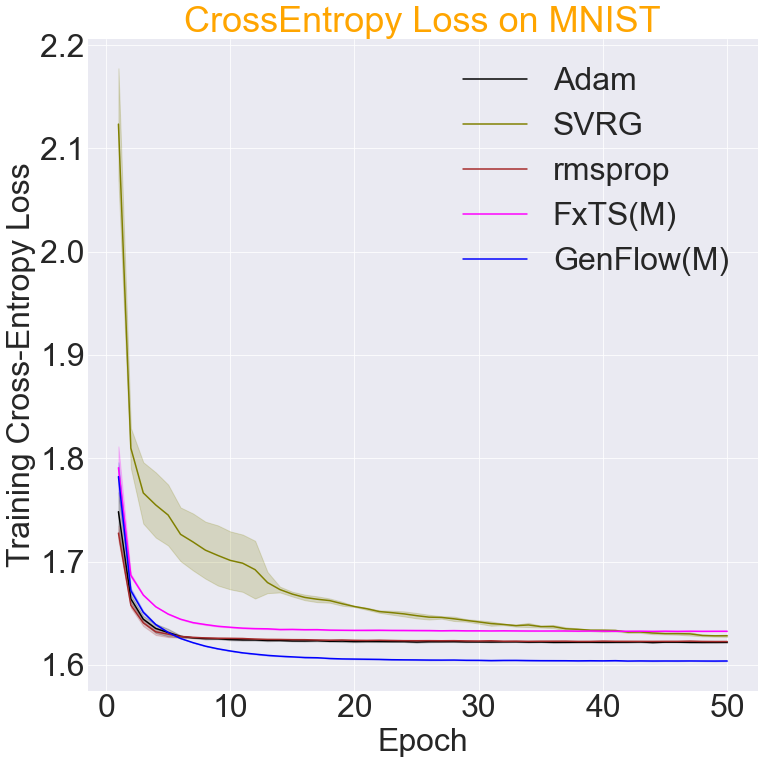}&
			\includegraphics[width=0.64\columnwidth]{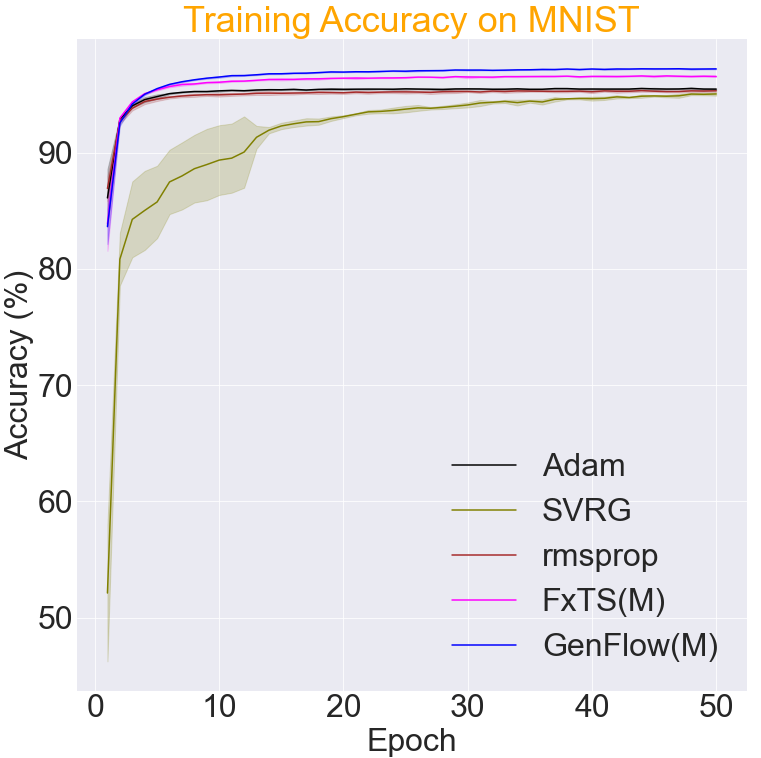}&
			\includegraphics[width=0.64\columnwidth]{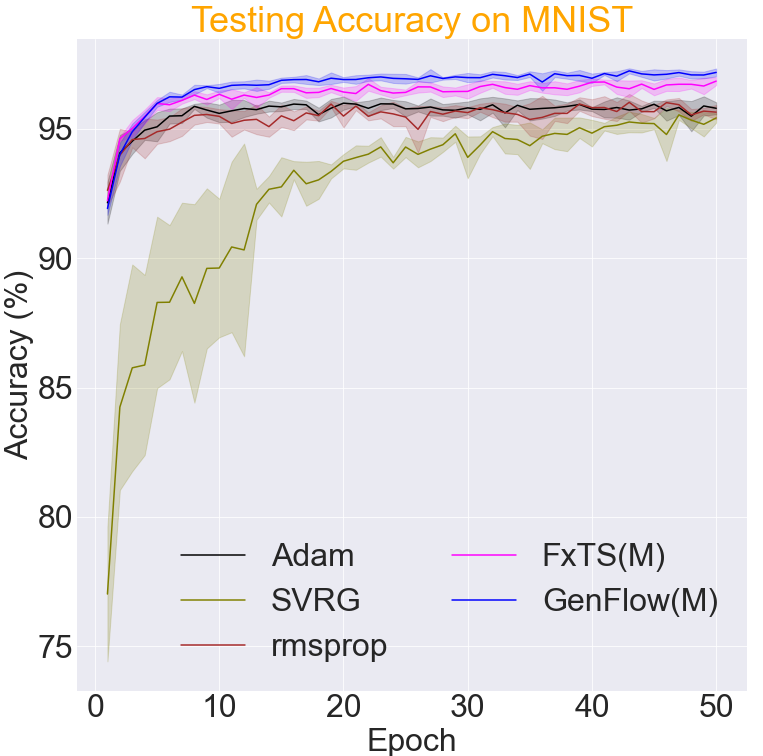}\cr
			(a)&(b)&(c)\cr
			\includegraphics[width=0.64\columnwidth]{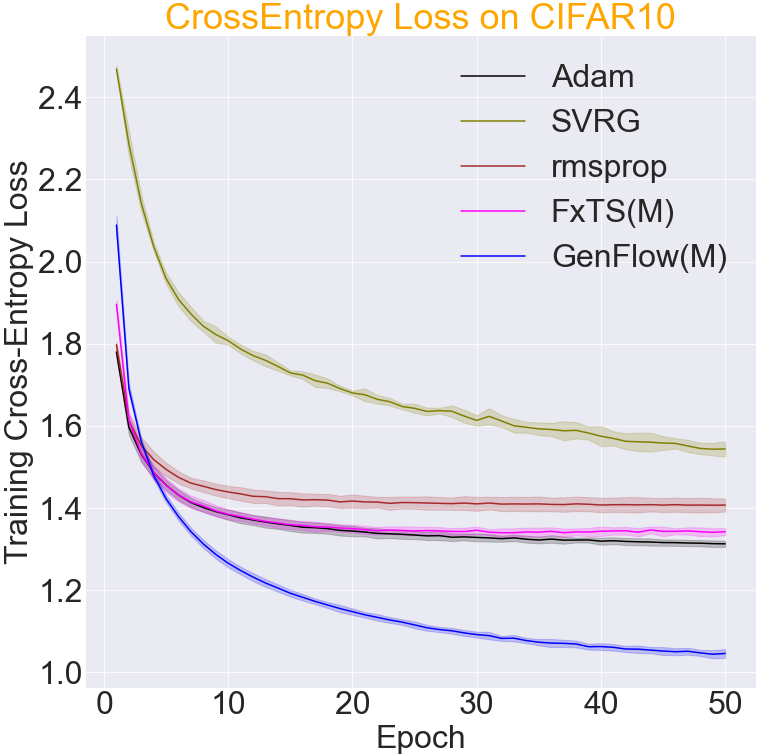}&
			\includegraphics[width=0.64\columnwidth]{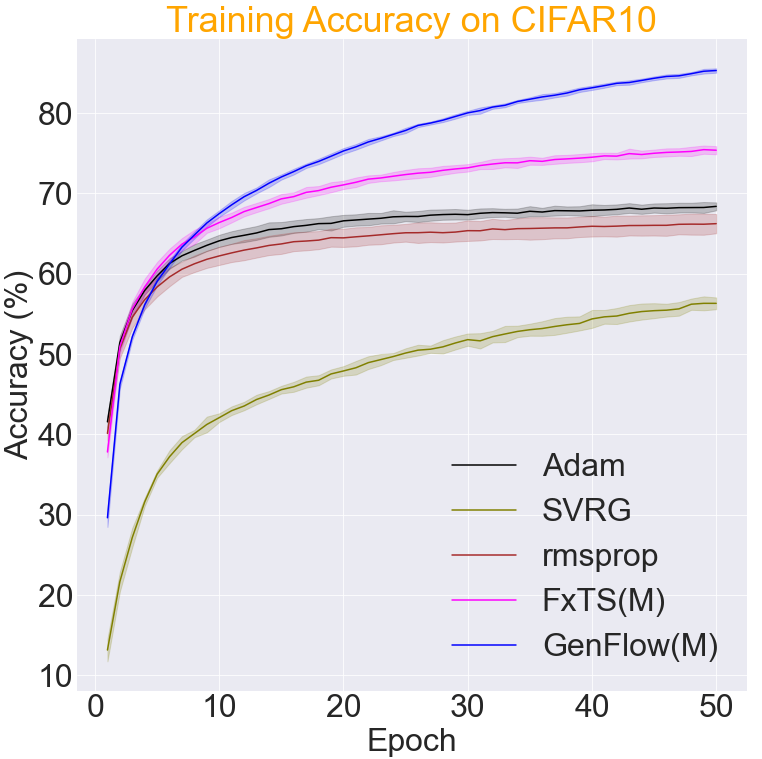}&
			\includegraphics[width=0.64\columnwidth]{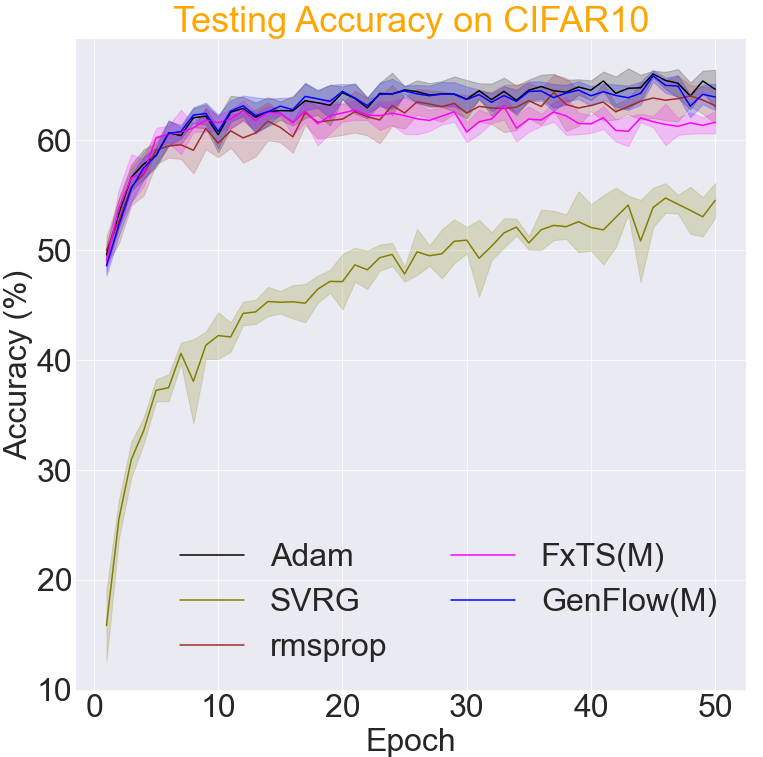}\cr
			(d)&(e)&(f)\cr
			\includegraphics[width=0.64\columnwidth]{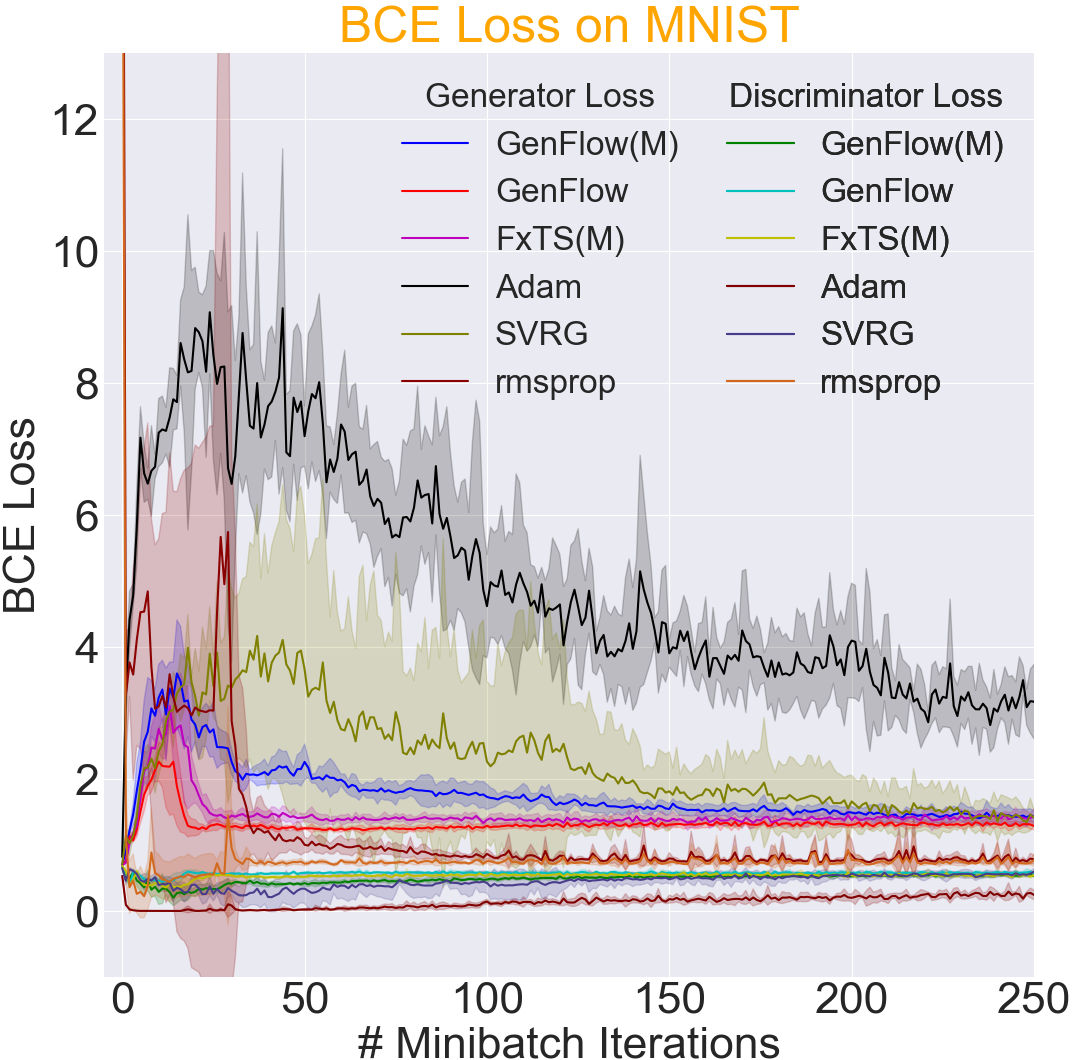}&
			\includegraphics[width=0.64\columnwidth]{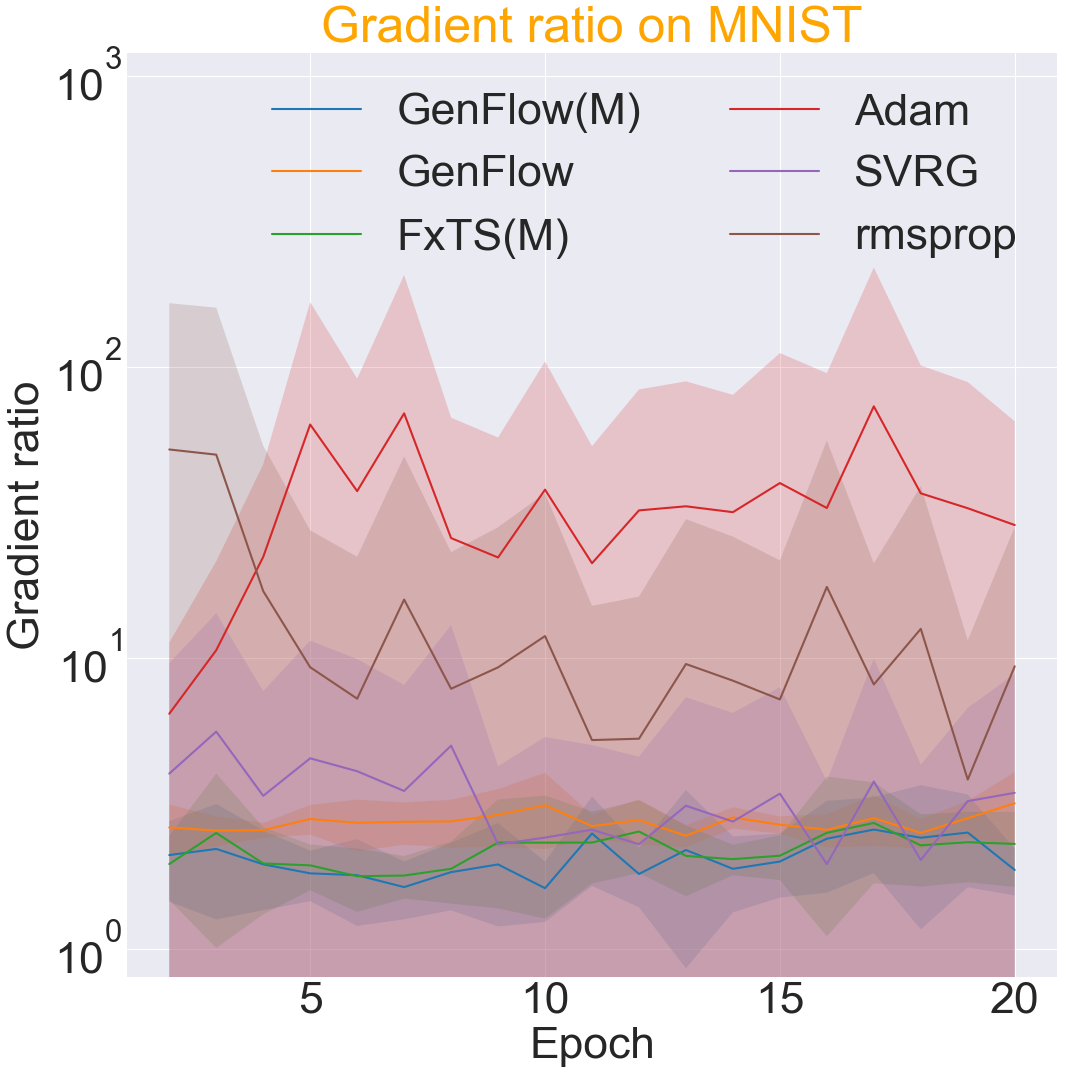}&
			\includegraphics[width=0.64\columnwidth]{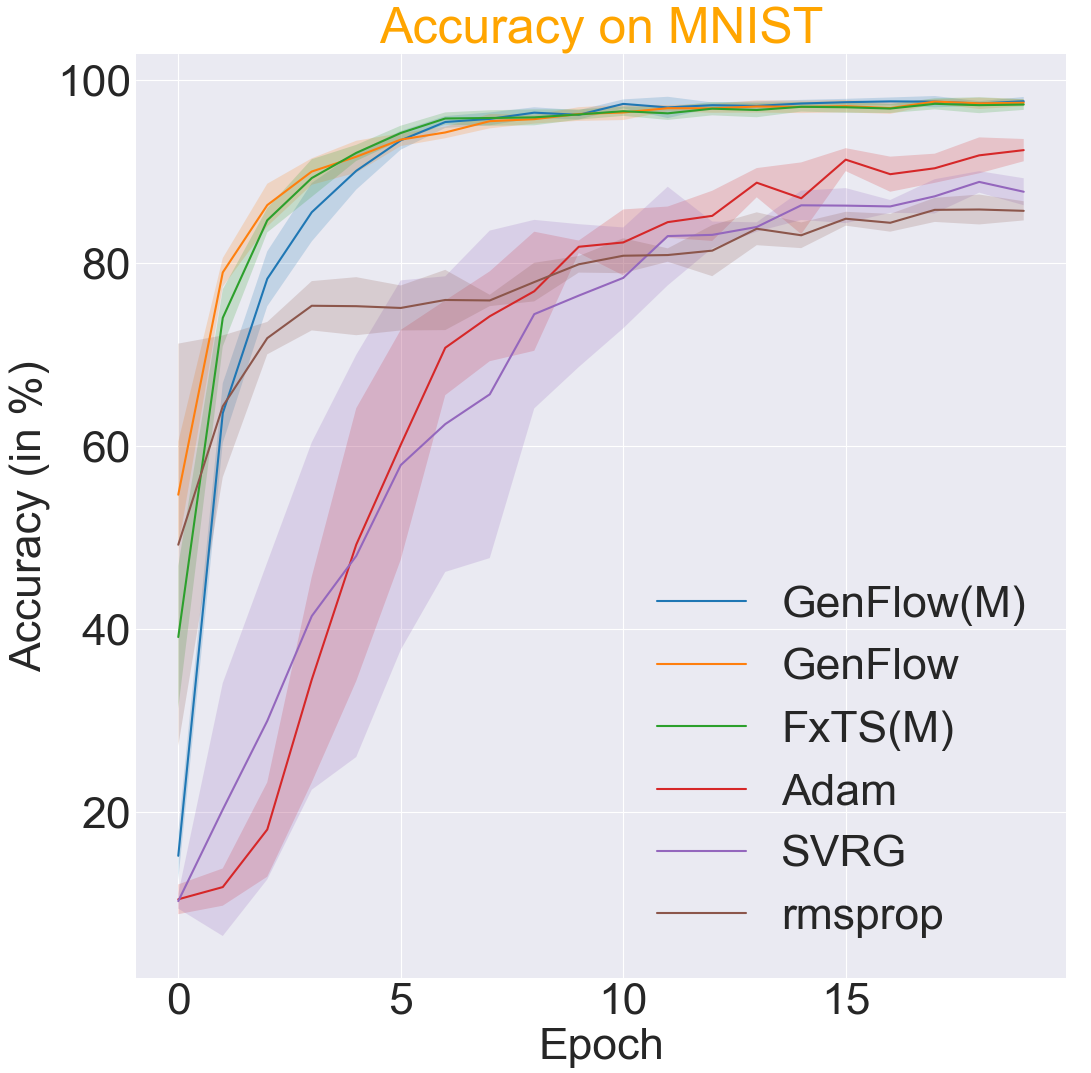}\cr
			(g)&(h)&(i)
		\end{tabular}
		\caption{Benchmarking GenFlow against SOTA optimizers for neural network based classification tasks and training stabilization of GANs: (a), (b), (c) indicate training and generalization performance of several optimizers on the MNIST dataset, while (d), (e), (f) analyze them on the CIFAR10 dataset. Items (g), (h), (i) depict training losses for GANs, gradient ratios for analyzing stabilization, and classification accuracy on images generated after each epoch, respectively.}
		\label{fig:Exp}
	\end{center}
\end{figure*}

\subsection{Fast Evasion of Non-Degenerate Saddle Points}

To illustrate fast evasion of saddle points, we consider the test function, $f(x,y)=0.5x^2+0.25y^4-0.5y^2$, described in~\cite{lee2016gd}. \co{Note that the function is strongly convex-strongly concave in its arguments.} The function admits three isolated critical points: $z_1=(0,0)$, $z_2=(0,-1)$, $z_3=(0,1)$, with $z_1$ being a non-degenerate saddle point, while $z_2$ and $z_3$ are isolated local minima. It is well known that the vanilla gradient descent may take exponential time to evade non-degenerate saddle points~\cite{du2017exp}. We, thus, benchmark GenFlow and \co{GenFlow(M)} against the gradient descent for saddle point evasion of the aforementioned test function. It can be seen in \com{Fig.~\ref{fig:SnapShots}} that while \co{all the algorithms} successfully escape the saddle point $z_1$, the time of evasion is much smaller for GenFlow \co{and GenFlow(M)}, as opposed to the gradient descent. 

\co{Interestingly, while the theoretical results on fixed-time saddle point evasion are specifically presented for GenFlow, similar arguments can be extended to analyze the saddle point evasion characteristics of GenFlow(M) as well. Notably,} \co{GenFlow(M) appears to be the fastest among the three methods due to the combined influence of element-wise normalization and accumulation of momentum.}

%%%

\subsection{Training of Deep Neural Networks}

The accelerated convergence behavior of GenFlow for training of neural network classifiers is benchmarked on the two most widely datasets: MNIST (60k training images, 10k test images)~\cite{lecun1998gradient}; and CIFAR10 (50k training images, 10k test images)~\cite{krizhevsky2009learning}. 

%Additionally, training stabilization of GANs is also benchmarked on the MNIST dataset.

\subsubsection{Baseline optimizers} We benchmark GenFlow(M) against the most widely used optimizers belonging to different families of first-order optimization algorithms: (i) RMSprop~\cite{hinton2012neural}, (ii) Stochastic Variance Reduced Gradient (SVRG)~\cite{reddi2016stochastic}, (iii) Adam~\cite{kingma2014adam}, and (iv) FxTS-GF(M)~\cite{budhraja2021breaking}.\footnote{\co{During our experiments, we observed that the performance of GenFlow is only marginally inferior to that of GenFlow(M). Therefore, in the interest of clearer visualization, we have not included the results from GenFlow for this case study.}} The hyperparameters for each algorithm are tuned for optimal performance. For instance, it was observed that a large learning rate for Adam or RMSprop significantly destabilizes the training, and hence a suitably tuned smaller learning rate was chosen for classification problems. In particular, we considered the following parameter choices to find the best performing hyperparameters: learning rates \com{(Euler discretization step-sizes)} in $\{10^{-4}, 2\times10^{-4}, 10^{-3}, 5\times10^{-3}, 10^{-2}, 5\times10^{-2}\}$ and momentum parameter \com{($\beta$)} in $\{0.1, 0.2, \dots, 0.9\}$. Similarly, the exponents $(p,q)$ utilized in gradient normalization of GenFlow and GenFlow(M), have well-defined ranges with the upper bound on convergence time decreasing as $p\to +\infty$ and $q\to 0^{+}$. In order to avoid any numerical instabilities upon discretization, these parameters are tuned gradually towards extreme values starting from $(p,q)=(2.1,1.99)$.

\subsubsection{Evaluation details} For the classification tasks, the optimization algorithms are evaluated on three criteria: (i) Training cross-entropy loss, (ii) Training accuracy, and (iii) Testing accuracy (for evaluating the quality of local minima in terms of generalization capability). 

%For the task of GAN training, the optimization algorithms are evaluated on three criteria: (i) Accuracy of images generated by the Generator (calculated using a pre-trained model available at: \url{github.com/aaron-xichen/pytorch-playground}), (ii) Binary Cross Entropy (BCE) Loss of the Generator and the Discriminator, and (iii) Gradient Ratio, the ratio is equal to $\|grad^{(t)}_{D}\|_{2}/\|grad^{(0)}_{D}\|_{2}+\|grad^{(t)}_{G}\|_{2}/\|grad^{(0)}_{G}\|_{2}$, where $grad^{(0)}_D$ (respectively $grad^{(0)}_G$) and $grad^{(t)}_D$ (respectively $grad^{(t)}_G$) are the initial and current gradients of Discriminator (respectively Generator)~\cite{jelassi2022adam}.

\subsubsection{Network details} For classification on the MNIST dataset, we have considered a network with a single convolutional layer with ReLU activation (consisting of 32 filters of size $3\times 3$), followed by a dense layer (with ReLU activation) of output size 128. The final linear layer transforms 128-dimensional input to a 10-dimensional output (corresponding to 10 classes) with SoftMax activation. Our architecture for CIFAR10 comprises of two convolutional layers, each followed by a max-pooling layer (with a $2\times2$ window). The convolutional layers comprise of 6 and 16 filters, each of size $5\times 5$ with ReLU activation, respectively. This is followed by two dense layers with ReLU activation (of sizes 120 and 84, respectively) and an output layer with SoftMax activation of size 10 (corresponding to 10 classes). We use $\ell_2$-regularized cross-entropy loss for the classification tasks. 

%For experiments on training GANs, Conditional GAN (cGAN) with the following architecture is used: (i) Generator network consists of an input layer, two hidden layers and an output layer. Both the input and hidden layers comprise of 2D transposed batch normalized convolutional layer with ReLU activation, while the output layer is a simple 2D transposed convolutional layer with \texttt{tanh} activation. The convolutional filter parameters, represented by the tuple (input dimension, output dimension, kernel size, stride), for the four layers in order are (74,256,3,2), (256,128,4,1), (128,64,3,2), and (64,1,4,2). (ii) Discriminator network is a simpler architecture comprising of input, hidden and output layers. Here, we use 2D transposed batch normalized convolutional layer with leaky ReLU activation (with 0.2 negative slope) for the input and the hidden layers. The output layer is a 2D convolutional layer without any nonlinear activation. The convolutional filter parameters for the three layers in order are (11,64,4,2), (64,128,4,2), and (128,1,4,2).

\subsubsection{Results}
For classification tasks on both MNIST and CIFAR10 datasets, we tune the hyperparameters for optimal performance. In particular, we arrived at the following optimizer settings for classification task on MNIST: (i) Adam (lr=$10^{-3}$), (ii) SVRG (lr=$10^{-2}$), (iii) RMSprop (lr=$10^{-3}$), (iv) FxTS(M) (lr=$10^{-2}$,momentum=0.7,$(p,q)=20,1.98$), (v) GenFlow(M) (lr=$10^{-2}$,momentum=0.9,$(p,q)=10,1.98$). \com{Fig.~\ref{fig:Exp}a} represents the training cross-entropy loss as a function of epochs, while \com{Figs.~\ref{fig:Exp}b} and \ref{fig:Exp}c depict the corresponding training and test accuracy obtained using the aforementioned optimizers, and averaged across five random seeds. It can be seen that GenFlow(M) not only arrives at an optimal solution faster than other methods, the variance across the random seeds is significantly small. \co{We attribute the lower variance of GenFlow(M) to its improved robustness against additive perturbations, a characteristic that can be demonstrated through a theoretical analysis similar to \cite{budhraja2021breaking}.} Both Adam and RMSprop could not be trained efficiently with higher learning rates, and the learning rates had to be reduced to $10^{-3}$ for optimal performance.

A similar behavior is observed on the CIFAR10 dataset (\com{Figs.~\ref{fig:Exp}d-f}). While the generalization performance of these optimizers on test dataset is largely very similar (barring the SVRG method), the performance on the training dataset in terms of $\ell_2$-regularized cross-entropy loss, as well as training accuracy, is far better for the proposed GenFlow(M). The optimal performances were obtained using following hyperparameters: (i) Adam (lr=$10^{-3}$), (ii) SVRG (lr=$10^{-3}$), (iii) RMSprop (lr=$10^{-3}$), (iv) FxTS(M) (lr=$5\times10^{-3}$,momentum=0.7,$(p,q)=20,1.98$), (v) GenFlow(M) (lr=$5\times10^{-3}$,momentum=0.5,$(p,q)=10,1.98$). Here, lr represents the learning rate of an optimization algorithm.

\subsection{Training of GANs}
GenFlow(M) and GenFlow are further evaluated by training GANs, modeled as minimax problems, on MNIST dataset. For this task, the optimization algorithms are evaluated on three criteria: (i) Accuracy of images generated by the Generator (calculated using a pre-trained model available at: \url{github.com/aaron-xichen/pytorch-playground}), (ii) Binary Cross Entropy (BCE) Loss of the Generator and the Discriminator, and (iii) Gradient Ratio, the ratio is equal to $\|grad^{(t)}_{D}\|_{2}/\|grad^{(0)}_{D}\|_{2}+\|grad^{(t)}_{G}\|_{2}/\|grad^{(0)}_{G}\|_{2}$, where $grad^{(0)}_D$ (respectively $grad^{(0)}_G$) and $grad^{(t)}_D$ (respectively $grad^{(t)}_G$) are the initial and current gradients of Discriminator (respectively Generator)~\cite{jelassi2022adam}.

\subsubsection{Network details}
Conditional GAN (cGAN) with the following architecture is used: (i) Generator network consists of an input layer, two hidden layers and an output layer. Both the input and hidden layers comprise of 2D transposed batch normalized convolutional layer with ReLU activation, while the output layer is a simple 2D transposed convolutional layer with \texttt{tanh} activation. The convolutional filter parameters, represented by the tuple (input dimension, output dimension, kernel size, stride), for the four layers in order are (74,256,3,2), (256,128,4,1), (128,64,3,2), and (64,1,4,2). (ii) Discriminator network is a simpler architecture comprising of input, hidden and output layers. Here, we use 2D transposed batch normalized convolutional layer with leaky ReLU activation (with 0.2 negative slope) for the input and the hidden layers. The output layer is a 2D convolutional layer without any nonlinear activation. The convolutional filter parameters for the three layers in order are (11,64,4,2), (64,128,4,2), and (128,1,4,2).

\subsubsection{Results}
The performance of GenFlow(M) and GenFlow is benchmarked against the FxTS-GF(M), Adam, SVRG and RMSprop optimizers. For each optimizer, best hyperparameters are selected based on Accuracy of images generated by the Generator for 20 epochs, averaged over 3 seeds. After finding the best hyperparameters, the plots in \com{Fig.~\ref{fig:Exp}g-i} are obtained for 5 seeds. The values of best hyperparameter tuple (generator step-size (lr), discriminator to generator step-size ratio, momentum (if applicable)) for GenFlow, GenFlow(M), FxTS-GF(M), Adam, SVRG and RMSprop are (0.01, 5, $p$ = 2.1, $q$ = 1.9, $\beta_1$ = 0.9, $\beta_2$ = 1.8), (0.005, 5, 0.3, $p$ = 2.1, $q$ = 1.9), (0.01, 5, 0.2, $p$ = 2.1, $q$ = 1.9), (0.0002, 10), (0.05, 5), and (0.005, 5, 0.5) respectively. From the \com{Fig.~\ref{fig:Exp}i}, we can see that GenFlow(M) and GenFlow outperforms other methods by achieving optimal solution faster and having low variance across seeds. \com{Fig.~\ref{fig:Exp}h} shows that GenFlow(M) and GenFlow has better training stability compared to other methods as the gradient ratio approximately stays near 1~\cite{jelassi2022adam}. Other details: (i) Mini-batch size = 128, (ii) Generator Gaussian noise dimension = 64 (iii) GenFlow hyperparameters $\beta_1$ and $\beta_2$ are multiplied with first and second terms in the RHS of \eqref{eq: fadagrad} respectively. Images generated by the generator of each of the optimizers after 20 epochs are shown in \com{Fig.~\ref{fig:GenImages}}. From the figure, it can be seen that the images generated by GenFlow(M) and GenFlow are qualitatively superior than the ones generated by Adam, RMSprop and SVRG optimizers. The difference is clearly visible when we compare digits, such as 4, 5, 6, and 7.

\begin{figure*}[!ht]
	\begin{center}
		\begin{tabular}{ccc}
			\includegraphics[width=0.6\columnwidth]{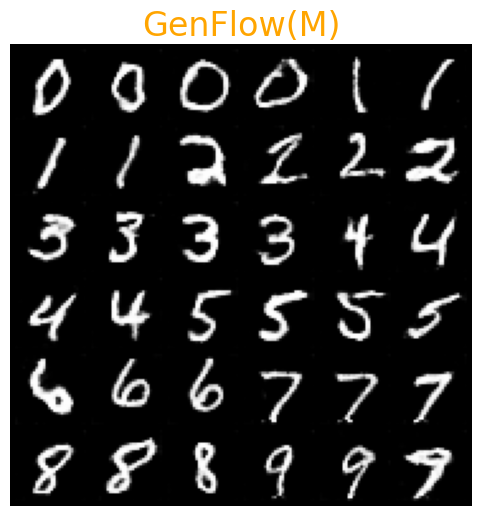}&
			\includegraphics[width=0.6\columnwidth]{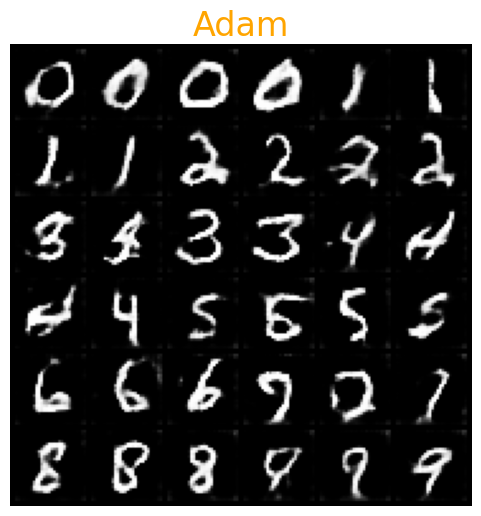}&
			\includegraphics[width=0.6\columnwidth]{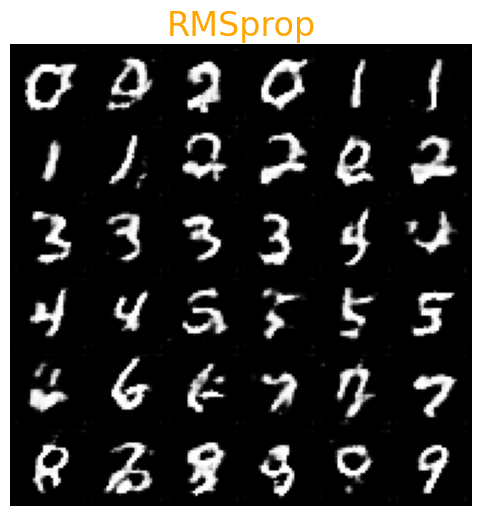}\cr
			(a)&(b)&(c)\cr
			\includegraphics[width=0.6\columnwidth]{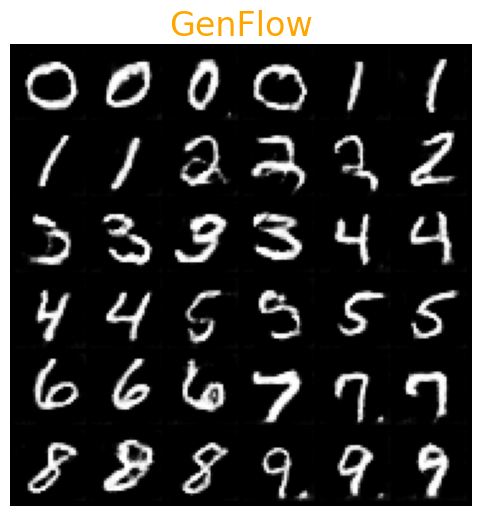}&
			\includegraphics[width=0.6\columnwidth]{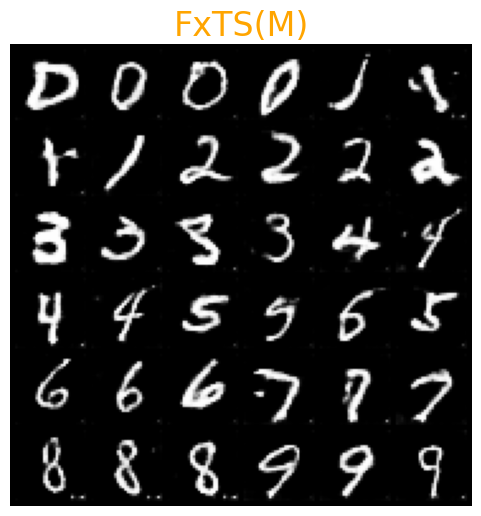}&
			\includegraphics[width=0.6\columnwidth]{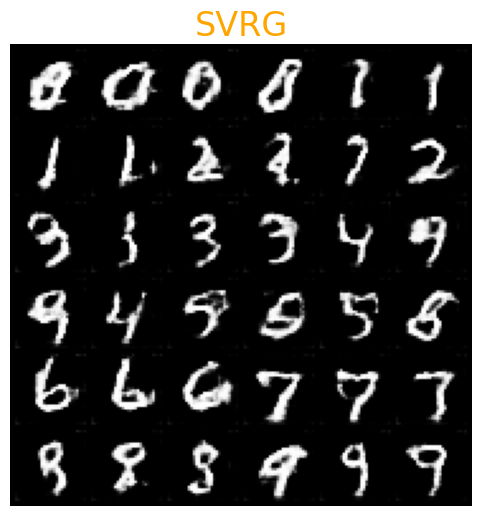}\cr
			(d)&(e)&(f)\cr
		\end{tabular}
		\caption{Comparison of generator generated images after 20 epochs trained using different optimizers.}
		\label{fig:GenImages}
	\end{center}
\end{figure*}

\section{Conclusion and Future Work}
In this work, we propose the {\it GenFlow} gradient-flow scheme and its momentum variant which were shown to converge to the optimum of cost functions satisfying PL-inequality in fixed time (i.e., the convergence time is finite and uniformly upper bounded irrespective of initial candidate solution). We further bound escape time from non-degenerate saddle points and show fixed-time convergence to the solution of minimax problems under suitable assumptions. Experimental results show superior convergence properties compared to state-of-the-art optimizers in training neural networks and GANs.

\co{Future research directions involve analyzing consistent discretization strategies of the proposed GenFlow and GenFlow(M). In particular, we aim to explore the conditions under which the discretized GenFlow algorithm converges to an $\epsilon$-neighborhood around the optimal solution within a uniformly bounded number of iterations. Of particular interest is the provision of upper bounds on the discretization step-size, considering additional assumptions of Lipschitz smoothness. Exploring faster momentum-based methods for solving saddle point problems is an area that has received relatively less attention. In our future work, we aim to extend the proposed saddle-point dynamics to incorporate a momentum variant, which has the potential to further accelerate convergence. Finally, we also aim to analyze the variance reduction properties and robustness of the proposed algorithms
to noisy gradients in a follow up work.}

%%%%%%%%%%%%%%%%%%%%%%%%%%%%%%%%%%%%%%%%%%%%%%%
%%%%%%%%%%%%%%%%%%%%%%%%%%%%%%%%%%%%%%%%%%%%%%%
%%%%%%%%%%%%%%%%%%%%%%%%%%%%%%%%%%%%%%%%%%%%%%%

\appendices
\section{Proof of \lemref{lemma: norm_convert}}\label{append:A}
\begin{proof}
    The matrix $H$ is a symmetric matrix. Thus its eigenvalue decomposition is given by $H=S D S^\top$ where $S$ is an orthogonal matrix and $D$ is a diagonal matrix of the eigenvalues of $H$. Clearly $|D|=\sqrt{D^\top D}$ is a diagonal matrix with modulus of the eigenvalues $\lambda \in \sigma(H)$ at the diagonal. Observe that $H^\top H = SD^2 S^\top$ and it can be easily proved that $|H| = \sqrt{H^\top H} = S|D|S^\top$. Now
    \begin{align*}
    \Tilde{d}(x) &= \sqrt{(x-x^*)^\top |H|(x-x^*)} \\
    &= \sqrt{(x-x^*)^\top S|D|S^T (x-x^*)} \\
    &\leq \sqrt{(x-x^*)^\top S \Lambda_{\max}I_n S^\top (x-x^*)}\\
    &= \sqrt{(x-x^*)^\top \Lambda_{\max} (x-x^*)} \\
    &= \sqrt{\Lambda_{\max}}\|x-x^*\|, \numberthis \label{eq: d_lemma_ubound}
    \end{align*}
    where to prove the inequality the following fact has been used $\Lambda_{\max}I_n - |D| \succeq 0$. Thus, when $\|x-x^*\|\leq \frac{a}{\sqrt{\Lambda_{\max}}}$, then \eqref{eq: x_to_dx} holds. Similarly, we obtain
    \begin{align*}
    \Tilde{d}(x) &\geq \sqrt{(x-x^*)^\top S \Lambda_{\min}I_n S^\top (x-x^*)} \\
    &= \sqrt{\Lambda_{\min}}\|x-x^*\|, \numberthis \label{eq: d_lemma_lbound}
    \end{align*}
    where the inequality is proved using $|D|-\Lambda_{\min}I_n \succeq 0$. If $\Tilde{d}(x)\leq a$ then by \eqref{eq: d_lemma_lbound} we obtain $\sqrt{\Lambda_{\min}}\|x - x^*\| \leq a$, i.e., \eqref{eq: dx_to_x} is established. This concludes the proof.
\end{proof}

\section{Proof of \lemref{lemma: taylor_approx}}\label{append:B}
\begin{proof}
    For the function $f$, we apply Taylor's theorem at $x^*$ to obtain
    \begin{align*}
    f(x) &= f(x^*) + \nabla f(x^*)^\top (x-x^*) \\
        &\ \ + \frac{1}{2} (x-x^*)^\top D^2 (f(x^*)) (x-x^*) + R_2 (x),
    \end{align*}
    where $R_2 (x)\rightarrow 0$ as $\|x-x^*\|\rightarrow 0$. Using the fact $\nabla f(x^*) =0$, we obtain
    \begin{align*}
    |f(x) - f(x^*)| 
    &= \left| \frac{1}{2} (x-x^*)^\top D^2 (f(x^*)) (x-x^*) + R_2 (x) \right| \\
    &\leq \frac{1}{2}\left| (x-x^*)^\top D^2 (f(x^*)) (x-x^*) \right|\!+\!|R_2 (x)| \\
    &\leq \frac{1}{2}\left| (x-x^*)^\top |H| (x-x^*) \right|\!+\!|R_2 (x)| \\
    &= \frac{1}{2} \Tilde{d}(x)^2 + |R_2 (x)|,
    \end{align*}
    where to obtain the second inequality, the relation $|H|-H \succeq 0$ is used. Since $\lim_{\|x-x^*\|\to 0} R_2(x)=0$ for some neighbourhood of $x^*$, for some $k_1 > 0.5$ it is true that $k_1 \Tilde{d}(x)^2 \geq \frac{1}{2}\Tilde{d}(x)^2 + |R_2 (x)|$. Thus, the inequality \eqref{eq: bound_f} is proved.
    
    We now \co{invoke} Taylor's theorem for $\nabla f(x)$ at $x^*$ to obtain 
    \begin{align*}
    \nabla f(x) &= \nabla f(x^*) + D^2 (f(x^*)) (x-x^*) + R_1 (x) \\
    &= H(x-x^*) + R_1 (x),
    \end{align*}
    where the remainder term $R_1 (x)$ satisfies $\lim_{\|x-x^*\|_1 \to 0} \|R_1(x)\|_1 =0$.
    % Reference needed
    Reverse triangle inequality with respect to $\ell_1$ norm yields
    \begin{align*}
    \|\nabla f(x)\|_1 &= \|H(x-x^*) + R_1 (x)\|_1 \\
    &\geq \|H(x-x^*)\|_1 - \|R_1 (x)\|_1 .
    \end{align*}
    Now, for some $k_2 <1$ it can be proved that $\|H(x-x^*)\|_1 - \|R_1 (x)\|_1 \geq k_2 \|H(x-x^*)\|_1$ holds for some neighbourhood of $x^*$ which leads to \eqref{eq: taylor_2}, and concludes the proof.
\end{proof}

\section{Proof of \thmref{th: fadagrad_evasion}}\label{append:C}
\begin{proof}
Without loss of generality, assume $x^*=0$ and let $H:=D^2(f(0))$. For $x\in \mathbb{R}^n$, define $\Tilde{d}(x):= \sqrt{x^\top|H|x}$, where $|B|:=\sqrt{B^\top B}$ for a square matrix $B$. Since the assumptions of Lemma \ref{lemma: norm_convert}-\ref{lemma: taylor_approx} are satisfied, their results can be used here. This also means that for some neighbourhood of $0\in \mathbb{R}^n$, the relations \eqref{eq: bound_f} and \eqref{eq: taylor_2} hold true. Now 
\begin{align*}
\|\nabla f(x(t))\|_1 &\geq k_2\|Hx(t)\|_1 \geq k_2\|Hx(t)\|_2 \\
&=k_2\||H|^{\frac{1}{2}}|H|^{\frac{1}{2}}x(t)\|_2 \\ 
&\geq k_2\sqrt{\Lambda_{\min}} \||H|^{\frac{1}{2}}x(t)\|_2 = k_2\sqrt{\Lambda_{\min}}\Tilde{d}(x(t)). \numberthis \label{eq: bound_gradient_norm}
\end{align*}
To derive the second inequality above, the fact $\|v\|_1 \geq \|v\|_2$ for $v\in\mathbb{R}^n$ is used.

Using the chain rule, we obtain 
\begin{align*}
\frac{d}{dt}f(x(t)) &= \sum_{i=1}^n \left( -|\nabla_i f(x(t))|^{\frac{p}{p-1}} -|\nabla_i f(x(t))|^{\frac{q}{q-1}} \right) 
\\
&= -\|\nabla f(x(t))\|_{\frac{p}{p-1}}^{\frac{p}{p-1}} -\|\nabla f(x(t))\|_{\frac{q}{q-1}}^{\frac{q}{q-1}} 
\\ &\leq -n^{\frac{-1}{p-1}}\|\nabla f(x(t))\|_{1}^{\frac{p}{p-1}}-n^{\frac{-1}{q-1}}\|\nabla f(x(t))\|_{1}^{\frac{q}{q-1}}, 
\end{align*}
where to derive the inequality, the relation $\|v\|_r \leq n^{\frac{1}{r}-\frac{1}{s}} \|v\|_s$ for $v\in \mathbb{R}^n$ and $s>r\geq1$ has been used.

We now use the inequality \eqref{eq: bound_gradient_norm} to obtain the following bound on the rate of change of $f(x(t))$:
\begin{align}\label{eq: rate_bound}
    -\frac{d}{dt}f(x(t)) &\geq \underbrace{\left(n^{\frac{-1}{p}} k_2 \sqrt{\Lambda_{\min}}\right)^{\frac{p}{p-1}}}_{k_3}\Tilde{d}(x)^{\frac{p}{p-1}} \nonumber\\
    &\ \ + \underbrace{\left(n^{\frac{-1}{q}} k_2 \sqrt{\Lambda_{\min}}\right)^{\frac{q}{q-1}}}_{k_4}\Tilde{d}(x)^{\frac{q}{q-1}} \nonumber \\
    \implies -\frac{d}{dt}f(x(t)) &\geq k_3\Tilde{d}(x)^{\frac{p}{p-1}} + k_4 \Tilde{d}(x)^{\frac{q}{q-1}}.
\end{align}

% Let $k_3=\left(n^{\frac{-1}{p}}k_2 \sqrt{\Lambda_{\min}}\right)^{\frac{p}{p-1}}$ and $k_4=\left(n^{\frac{-1}{q}} k_2 \sqrt{\Lambda_{\min}}\right)^{\frac{q}{q-1}}$. Then, one obtains
% \begin{equation} \label{eq: rate_bound}
% -\frac{d}{dt}f(x(t)) \geq k_3\Tilde{d}(x)^{\frac{p}{p-1}} + k_4 \Tilde{d}(x)^{\frac{q}{q-1}}.
% \end{equation}
Let the approximations \eqref{eq: bound_f} and \eqref{eq: bound_gradient_norm} be true inside the ball $B_{\hat{r}}(0)$ for $\hat{r}>0$. Suppose that $x(t)\in B_{\hat{r}}(0)$ for $t\in [t_1,t_2]$. Let $e(t):=\Tilde{d}(x(t))$ and now integrate \eqref{eq: rate_bound} to get 
\begin{equation*}
f(x(t_1))-f(x(t_2)) \geq k_3 \int_{t_1}^{t_2} e(s)^{\frac{p}{p-1}}ds + k_4 \int_{t_1}^{t_2} e(s)^{\frac{q}{q-1}}ds.
\end{equation*}
Let $r:= \kappa^{-\frac{1}{2}}\hat{r}$ where $\kappa=\frac{\Lambda_{\max}}{\Lambda_{\min}}$. For $\eta\leq \sqrt{\Lambda_{\max}}r$ it is true that $\Tilde{d}(x)\leq \eta$ implies $x\in B_{\hat{r}}(0)$. Let $t_0>0$ be the first time where $e(t)\leq \eta$ and $t_3$ be the last time when $e(t)\leq \eta$, i.e., $t_3 = \sup\{t\in[0,\infty] : e(t)\leq \eta\}$. The time $t_3$ is not assumed to be finite at this point but later it is proved that the length of the interval $[t_0, t_3]$ is bounded which in turn implies that $t_3$ is finite. If $t_3=\infty$, then in an abuse of notation let $f(x(\infty))=\lim_{t\to\infty}f(x(t))$, where note that the limit exists since $f(x(t))$ is monotonically non-increasing by \eqref{eq: rate_bound}. Integrate over the interval $[t_0, t_3]$ to get the following 
\begin{align*}
    f(x(t_0))-f(x(t_3)) &= \int_{t_0}^{t_3} -\frac{d}{ds}f(x(s))ds \\
    &\geq \int_{e(s)\leq\eta} -\frac{d}{ds}f(x(s))ds \\
    &\geq k_3 \int_{e(s)\leq\eta} e(s)^{\frac{p}{p-1}}ds \\
    &\ \ + k_4 \int_{e(s)\leq\eta} e(s)^{\frac{q}{q-1}}ds,
\end{align*}
where the fact $\frac{d}{dt}f(x(t))\leq 0$ has been used to derive the first inequality and the second inequality has been derived using the inequality \eqref{eq: rate_bound} over the intervals where $e(\cdot)\leq \eta$. Add and subtract $f(0)$ in the left hand side of the above inequality along with the bound \eqref{eq: bound_f} to get 
\begin{equation*}
2k_1\eta^2 \geq k_3 \int_{e(s)\leq\eta} e(s)^{\frac{p}{p-1}}ds + k_4 \int_{e(s)\leq\eta} e(s)^{\frac{q}{q-1}}ds.
\end{equation*}
As $e(\cdot)\geq 0$, the following two inequalities can be deduced
\begin{align}
\int_{e(s)\leq\eta} e(s)^{\frac{p}{p-1}}ds &\leq \frac{2k_1}{k_3}\eta^2, \label{eq: star}\\ 
\int_{e(s)\leq\eta} e(s)^{\frac{q}{q-1}}ds &\leq \frac{2k_1}{k_4}\eta^2. \label{eq: double_star}
\end{align}

Using Markov's inequality \cite{federer2014measure}, we obtain 
\begin{align*}
    \mathcal{L}^1&\left(\left\{s:  \frac{\eta^{\frac{p}{p-1}}}{2} \leq e(s)^{\frac{p}{p-1}} \leq \eta^{\frac{p}{p-1}} \right\}\right) \\
    &\leq \frac{2}{\eta^{\frac{p}{p-1}}}\int_{e(s)^{\frac{p}{p-1}}\leq \eta^{\frac{p}{p-1}}} e(s)^{\frac{p}{p-1}}ds \\
    &= \frac{2}{\eta^{\frac{p}{p-1}}}\int_{e(s)\leq \eta} e(s)^{\frac{p}{p-1}}ds \leq \frac{4k_1}{k_3} \eta^{\frac{p-2}{p-1}},
\end{align*}
where to obtain the final inequality the relation \eqref{eq: star} has been used. We now iteratively apply the Markov's inequality to obtain 
\begin{align*}
\mathcal{L}^1&\left(\{s:0< e(s)^{\frac{p}{p-1}}\leq \eta^{\frac{p}{p-1}}\}\right) \\
&= \sum_{i=0}^{\infty} \mathcal{L}^1 \left(\left\{s : \frac{\eta^{\frac{p}{p-1}}}{2^{i+1}} \leq e(s)^{\frac{p}{p-1}} \leq \frac{\eta^{\frac{p}{p-1}}}{2^{i}}\right\}\right) \\
&\leq \sum_{i=0}^{\infty} \frac{4k_1}{k_3} \frac{\eta^{\frac{p-2}{p-1}}}{2^i} = \frac{8k_1}{k_3} \eta^{\frac{p-2}{p-1}}.
\end{align*}
Since $ \{s:0<e(s)\leq \eta\}= \{s:0< e(s)^{\frac{p}{p-1}}\leq \eta^{\frac{p}{p-1}}\}$ one gets 
\begin{equation} \label{eq: d_tilde_bound}
\mathcal{L}^1\left(\{s:0<e(s)\leq \eta\}\right) \leq \frac{8k_1}{k_3} \eta^{\frac{p-2}{p-1}}.
\end{equation}

Observe that $\{s: 0<\|x(s)\|\leq r\} \subset \{s: 0<\Tilde{d}(x(s))\leq \sqrt{\Lambda_{\max}}r\}$. Substituting $\eta=\sqrt{\Lambda_{\max}}r$ in \eqref{eq: d_tilde_bound}, we obtain
\begin{equation} \label{eq: evasion_bound1}
\mathcal{L}^1\left(\{s: 0<\|x(s)\|\leq r\}\right) \leq n^{\frac{1}{p-1}}\frac{8k_1}{k_2^{\frac{p}{p-1}}}\frac{\Lambda_{\max}^{\frac{p-2}{2(p-1)}}}{\Lambda_{\min}^{\frac{p}{2(p-1)}}}r^{\frac{p-2}{p-1}}. 
\end{equation}
Instead of using \eqref{eq: star}, one can use \eqref{eq: double_star} and follow similar steps to obtain the following set of inequalities:
\begin{align*}
\mathcal{L}^1\left(\{s:0< e(s)^{\frac{q}{q-1}}\leq \eta^{\frac{q}{q-1}}\}\right) &\leq \frac{8k_1}{k_4} \eta^{\frac{q-2}{q-1}}, \\
\mathcal{L}^1\left(\{s:0<e(s)\leq \eta\}\right) &\leq \frac{8k_1}{k_4} \eta^{\frac{q-2}{q-1}}, \\
\mathcal{L}^1\left(\{s: 0<\|x(s)\|\leq r\}\right) &\leq n^{\frac{1}{q-1}}\frac{8k_1}{ k_2^{\frac{q}{q-1}}}\frac{\Lambda_{\max}^{\frac{q-2}{2(q-1)}}}{\Lambda_{\min}^{\frac{q}{2(q-1)}}}r^{\frac{q-2}{q-1}}. \numberthis \label{eq: evasion_bound2}
\end{align*}
Both \eqref{eq: evasion_bound1} and \eqref{eq: evasion_bound2} are upper bounds on the length of time spent in $B_r(x^*)\setminus\{x^*\}$ so it follows that both are satisfied if the smaller of the two bounds is satisfied. This concludes the proof. 
\end{proof}

\bibliographystyle{IEEEtran}
\bibliography{myRef}

% Generated by IEEEtran.bst, version: 1.14 (2015/08/26)
\begin{thebibliography}{10}
\providecommand{\url}[1]{#1}
\csname url@samestyle\endcsname
\providecommand{\newblock}{\relax}
\providecommand{\bibinfo}[2]{#2}
\providecommand{\BIBentrySTDinterwordspacing}{\spaceskip=0pt\relax}
\providecommand{\BIBentryALTinterwordstretchfactor}{4}
\providecommand{\BIBentryALTinterwordspacing}{\spaceskip=\fontdimen2\font plus
\BIBentryALTinterwordstretchfactor\fontdimen3\font minus
  \fontdimen4\font\relax}
\providecommand{\BIBforeignlanguage}[2]{{%
\expandafter\ifx\csname l@#1\endcsname\relax
\typeout{** WARNING: IEEEtran.bst: No hyphenation pattern has been}%
\typeout{** loaded for the language `#1'. Using the pattern for}%
\typeout{** the default language instead.}%
\else
\language=\csname l@#1\endcsname
\fi
#2}}
\providecommand{\BIBdecl}{\relax}
\BIBdecl

\bibitem{bottou2018optimization}
L.~Bottou, F.~E. Curtis, and J.~Nocedal, ``Optimization methods for large-scale
  machine learning,'' \emph{SIAM Review}, vol.~60, no.~2, pp. 223--311, 2018.

\bibitem{beck2017first}
A.~Beck, \emph{First-order methods in optimization}.\hskip 1em plus 0.5em minus
  0.4em\relax SIAM, 2017.

\bibitem{polyak1964heavyball}
B.~Polyak, ``Some methods of speeding up the convergence of iteration
  methods,'' \emph{USSR Computational Mathematics and Mathematical Physics},
  vol.~4, no.~5, pp. 1--17, 1964.

\bibitem{nesterov1983nag}
Y.~Nesterov, ``A method for unconstrained convex minimization problem with the
  rate of convergence $o(1/k^2)$,'' vol. 269.\hskip 1em plus 0.5em minus
  0.4em\relax Doklady AN SSR, 1983, pp. 543--547.

\bibitem{li2020accelerated}
H.~Li, C.~Fang, and Z.~Lin, ``Accelerated first-order optimization algorithms
  for machine learning,'' \emph{Proceedings of the IEEE}, vol. 108, no.~11, pp.
  2067--2082, 2020.

\bibitem{wibisono2016variational}
A.~Wibisono, A.~C. Wilson, and M.~I. Jordan, ``A variational perspective on
  accelerated methods in optimization,'' \emph{Proceedings of the National
  Academy of Sciences}, vol. 113, no.~47, pp. E7351--E7358, 2016.

\bibitem{kovachki2021continuous}
N.~B. Kovachki and A.~M. Stuart, ``Continuous time analysis of momentum
  methods,'' \emph{Journal of Machine Learning Research}, vol.~22, no.~17, pp.
  1--40, 2021.

\bibitem{khalil2002nonlinear}
H.~Khalil, \emph{Nonlinear Systems}.\hskip 1em plus 0.5em minus 0.4em\relax
  Prentice Hall, 2002.

\bibitem{lessard2016analysis}
L.~Lessard, B.~Recht, and A.~Packard, ``Analysis and design of optimization
  algorithms via integral quadratic constraints,'' \emph{SIAM Journal on
  Optimization}, vol.~26, no.~1, pp. 57--95, 2016.

\bibitem{xu2018accelerated}
P.~Xu, T.~Wang, and Q.~Gu, ``Accelerated stochastic mirror descent: From
  continuous-time dynamics to discrete-time algorithms,'' in
  \emph{International Conference on Artificial Intelligence and
  Statistics}.\hskip 1em plus 0.5em minus 0.4em\relax PMLR, 2018, pp.
  1087--1096.

\bibitem{orvieto2019continuous}
A.~Orvieto and A.~Lucchi, ``Continuous-time models for stochastic optimization
  algorithms,'' \emph{Advances in Neural Information Processing Systems},
  vol.~32, 2019.

\bibitem{romero2020finite}
O.~Romero and M.~Benosman, ``Finite-time convergence in continuous-time
  optimization,'' in \emph{International Conference on Machine Learning}.\hskip
  1em plus 0.5em minus 0.4em\relax PMLR, 2020, pp. 8200--8209.

\bibitem{budhraja2021breaking}
P.~Budhraja, M.~Baranwal, K.~Garg, and A.~Hota, ``Breaking the convergence
  barrier: Optimization via fixed-time convergent flows,'' in \emph{Proceedings
  of the AAAI Conference on Artificial Intelligence}, vol.~36, no.~6, 2022, pp.
  6115--6122.

\bibitem{bhat2000}
S.~P. Bhat and D.~S. Bernstein, ``Finite-time stability of continuous
  autonomous systems,'' \emph{SIAM Journal on Control and Optimization},
  vol.~38, no.~3, pp. 751--766, 2000.

\bibitem{cortes2006}
J.~Cortés, ``Finite-time convergent gradient flows with applications to
  network consensus,'' \emph{Automatica}, vol.~42, no.~11, pp. 1993--2000,
  2006.

\bibitem{polyakov2011nonlinear}
A.~Polyakov, ``Nonlinear feedback design for fixed-time stabilization of linear
  control systems,'' \emph{IEEE Transactions on Automatic Control}, vol.~57,
  no.~8, pp. 2106--2110, 2011.

\bibitem{garg2021}
K.~Garg and D.~Panagou, ``Fixed-time stable gradient flows: Applications to
  continuous-time optimization,'' \emph{IEEE Transactions on Automatic
  Control}, vol.~66, no.~5, pp. 2002--2015, 2021.

\bibitem{karimi2016linear}
H.~Karimi, J.~Nutini, and M.~Schmidt, ``Linear convergence of gradient and
  proximal-gradient methods under the polyak-{\l}ojasiewicz condition,'' in
  \emph{Joint European Conference on Machine Learning and Knowledge Discovery
  in Databases}.\hskip 1em plus 0.5em minus 0.4em\relax Springer, 2016, pp.
  795--811.

\bibitem{frei2021proxy}
S.~Frei and Q.~Gu, ``Proxy convexity: A unified framework for the analysis of
  neural networks trained by gradient descent,'' \emph{Advances in Neural
  Information Processing Systems}, vol.~34, pp. 7937--7949, 2021.

\bibitem{liu2022loss}
C.~Liu, L.~Zhu, and M.~Belkin, ``Loss landscapes and optimization in
  over-parameterized non-linear systems and neural networks,'' \emph{Applied
  and Computational Harmonic Analysis}, vol.~59, pp. 85--116, 2022.

\bibitem{du2018gradient}
\BIBentryALTinterwordspacing
S.~S. Du, X.~Zhai, B.~Poczos, and A.~Singh, ``Gradient descent provably
  optimizes over-parameterized neural networks,'' in \emph{International
  Conference on Learning Representations}, 2019. [Online]. Available:
  \url{https://openreview.net/forum?id=S1eK3i09YQ}
\BIBentrySTDinterwordspacing

\bibitem{allen2019convergence}
Z.~Allen-Zhu, Y.~Li, and Z.~Song, ``A convergence theory for deep learning via
  over-parameterization,'' in \emph{International Conference on Machine
  Learning}.\hskip 1em plus 0.5em minus 0.4em\relax PMLR, 2019, pp. 242--252.

\bibitem{levy2016ngd}
K.~Y. Levy, ``The power of normalization: Faster evasion of saddle points,''
  \emph{arXiv preprint arXiv:1611.04831}, 2016.

\bibitem{murray2019ngd}
R.~Murray, B.~Swenson, and S.~Kar, ``Revisiting normalized gradient descent:
  Fast evasion of saddle points,'' \emph{IEEE Transactions on Automatic
  Control}, vol.~64, no.~11, pp. 4818--4824, 2019.

\bibitem{wilson2019accelerating}
A.~C. Wilson, L.~Mackey, and A.~Wibisono, ``Accelerating rescaled gradient
  descent: Fast optimization of smooth functions,'' \emph{Advances in Neural
  Information Processing Systems}, vol.~32, 2019.

\bibitem{bottou2010large}
L.~Bottou, ``Large-scale machine learning with stochastic gradient descent,''
  in \emph{Proceedings of COMPSTAT'2010}.\hskip 1em plus 0.5em minus
  0.4em\relax Springer, 2010, pp. 177--186.

\bibitem{kingma2014adam}
D.~P. Kingma and J.~Ba, ``Adam: A method for stochastic optimization,''
  \emph{arXiv preprint arXiv:1412.6980}, 2014.

\bibitem{duchi2011adaptive}
J.~Duchi, E.~Hazan, and Y.~Singer, ``Adaptive subgradient methods for online
  learning and stochastic optimization.'' \emph{Journal of Machine Learning
  Research}, vol.~12, no.~7, 2011.

\bibitem{ruder2016overview}
S.~Ruder, ``An overview of gradient descent optimization algorithms,''
  \emph{arXiv preprint arXiv:1609.04747}, 2016.

\bibitem{nesterov2003introductory}
Y.~Nesterov, \emph{Introductory lectures on convex optimization: A basic
  course}.\hskip 1em plus 0.5em minus 0.4em\relax Springer Science \& Business
  Media, 2003, vol.~87.

\bibitem{dauphin2014identifying}
Y.~N. Dauphin, R.~Pascanu, C.~Gulcehre, K.~Cho, S.~Ganguli, and Y.~Bengio,
  ``Identifying and attacking the saddle point problem in high-dimensional
  non-convex optimization,'' \emph{Advances in Neural Information Processing
  Systems}, vol.~27, 2014.

\bibitem{lee2016gd}
J.~D. Lee, M.~Simchowitz, M.~I. Jordan, and B.~Recht, ``Gradient descent only
  converges to minimizers,'' in \emph{29th Annual Conference on Learning
  Theory}, ser. Proceedings of Machine Learning Research, vol.~49.\hskip 1em
  plus 0.5em minus 0.4em\relax PMLR, 2016, pp. 1246--1257.

\bibitem{du2017exp}
S.~S. Du, C.~Jin, J.~D. Lee, M.~I. Jordan, B.~P\'{o}czos, and A.~Singh,
  ``Gradient descent can take exponential time to escape saddle points,'' in
  \emph{Proceedings of the 31st International Conference on Neural Information
  Processing Systems}, ser. NIPS'17, 2017, p. 1067–1077.

\bibitem{ge2015saddle}
R.~Ge, F.~Huang, C.~Jin, and Y.~Yuan, ``Escaping from saddle points - online
  stochastic gradient for tensor decomposition,'' in \emph{Proceedings of The
  28th Conference on Learning Theory, {COLT} 2015, Paris, France, July 3-6,
  2015}, ser. {JMLR} Workshop and Conference Proceedings, vol.~40.\hskip 1em
  plus 0.5em minus 0.4em\relax JMLR.org, 2015, pp. 797--842.

\bibitem{du1995minimax}
D.-Z. Du and P.~M. Pardalos, \emph{Minimax and applications}.\hskip 1em plus
  0.5em minus 0.4em\relax Springer Science \& Business Media, 1995, vol.~4.

\bibitem{goodfellow2014generative}
I.~Goodfellow, J.~Pouget-Abadie, M.~Mirza, B.~Xu, D.~Warde-Farley, S.~Ozair,
  A.~Courville, and Y.~Bengio, ``Generative adversarial nets,'' \emph{Advances
  in Neural Information Processing Systems}, vol.~27, 2014.

\bibitem{madry2017towards}
A.~Madry, A.~Makelov, L.~Schmidt, D.~Tsipras, and A.~Vladu, ``Towards deep
  learning models resistant to adversarial attacks,'' in \emph{International
  Conference on Learning Representations}, 2018.

\bibitem{berger2013statistical}
J.~O. Berger, \emph{Statistical decision theory and Bayesian analysis}.\hskip
  1em plus 0.5em minus 0.4em\relax Springer Science \& Business Media, 2013.

\bibitem{zuo2016distributed}
Z.~Zuo and L.~Tie, ``Distributed robust finite-time nonlinear consensus
  protocols for multi-agent systems,'' \emph{International Journal of Systems
  Science}, vol.~47, no.~6, pp. 1366--1375, 2016.

\bibitem{duchi2011adagrad}
J.~Duchi, E.~Hazan, and Y.~Singer, ``Adaptive subgradient methods for online
  learning and stochastic optimization,'' \emph{Journal of Machine Learning
  Research}, vol.~12, no.~61, pp. 2121--2159, 2011.

\bibitem{torch_adagrad}
``Pytorch {I}mplementation of {A}dagrad,''
  \url{https://pytorch.org/docs/stable/generated/torch.optim.Adagrad.html},
  accessed: 2023-19-16.

\bibitem{torch_rmsprop}
``Pytorch {I}mplementation of {RMS}prop,''
  \url{https://pytorch.org/docs/stable/generated/torch.optim.RMSprop.html#torch.optim.RMSprop},
  accessed: 2023-19-16.

\bibitem{muehlebach2021optimization}
M.~Muehlebach and M.~I. Jordan, ``Optimization with momentum: Dynamical,
  control-theoretic, and symplectic perspectives,'' \emph{Journal of Machine
  Learning Research}, vol.~22, no.~73, pp. 1--50, 2021.

\bibitem{garg2022}
K.~Garg, M.~Baranwal, R.~Gupta, and M.~Benosman, ``Fixed-time stable proximal
  dynamical system for solving {MVIP}s,'' \emph{IEEE Transactions on Automatic
  Control (Early Access)}, pp. 1--8, 2022.

\bibitem{suh2022continuous}
J.~J. Suh, G.~Roh, and E.~K. Ryu, ``Continuous-time analysis of accelerated
  gradient methods via conservation laws in dilated coordinate systems,'' in
  \emph{International Conference on Machine Learning}.\hskip 1em plus 0.5em
  minus 0.4em\relax PMLR, 2022, pp. 20\,640--20\,667.

\bibitem{sun2015rideablesaddle}
J.~Sun, Q.~Qu, and J.~Wright, ``When are nonconvex problems not scary?''
  \emph{arXiv preprint}, 2015.

\bibitem{lecun1998gradient}
Y.~LeCun, L.~Bottou, Y.~Bengio, and P.~Haffner, ``Gradient-based learning
  applied to document recognition,'' \emph{Proceedings of the IEEE}, vol.~86,
  no.~11, pp. 2278--2324, 1998.

\bibitem{krizhevsky2009learning}
A.~Krizhevsky, ``Learning multiple layers of features from tiny images,''
  \emph{Technical Report, University of Toronto}, 2009.

\bibitem{hinton2012neural}
\BIBentryALTinterwordspacing
G.~Hinton, N.~Srivastava, and K.~Swersky, ``Neural networks for machine
  learning, {L}ecture 6a, {O}verview of mini-batch gradient descent,'' 2012.
  [Online]. Available:
  \url{https://www.cs.toronto.edu/~bonner/courses/2016s/csc321/lectures/lec6.pdf}
\BIBentrySTDinterwordspacing

\bibitem{reddi2016stochastic}
S.~J. Reddi, A.~Hefny, S.~Sra, B.~Poczos, and A.~Smola, ``Stochastic variance
  reduction for nonconvex optimization,'' in \emph{International Conference on
  Machine Learning}.\hskip 1em plus 0.5em minus 0.4em\relax PMLR, 2016, pp.
  314--323.

\bibitem{jelassi2022adam}
\BIBentryALTinterwordspacing
S.~Jelassi, A.~Mensch, G.~Gidel, and Y.~Li, ``Adam is no better than normalized
  {SGD}: Dissecting how adaptivity improves {GAN} performance,'' 2022.
  [Online]. Available: \url{https://openreview.net/forum?id=D9SuLzhgK9}
\BIBentrySTDinterwordspacing

\bibitem{federer2014measure}
H.~Federer, \emph{Geometric Measure Theory}.\hskip 1em plus 0.5em minus
  0.4em\relax Springer, 2014.

\end{thebibliography}

\end{document}